\title{Multi-group Agnostic PAC Learnability}
\author{Guy N.\ Rothblum \thanks{This project has received funding from the European Research Council (ERC) under the European Union’s Horizon 2020 research and innovation programme (grant agreement No. 819702), from the Israel Science Foundation (grant number 5219/17), from the U.S.-Israel Binational Science Foundation (grant 2018102), and from the Simons Foundation Collaboration on the Theory of Algorithmic Fairness. Part of this work was done while the author was visiting Microsoft Research.}
\\Weizmann Institute of Science\\\texttt{rothblum@alum.mit.edu}
\and
Gal Yona \thanks{This project has received funding from the European Research Council (ERC) under the European Union’s Horizon 2020 research and innovation programme (grant agreement No. 819702), from the Israel Science Foundation (grant number 5219/17) and from the Simons Foundation Collaboration on the Theory of Algorithmic Fairness. This research was also partially supported by the Israeli Council for Higher Education (CHE) via the Weizmann Data Science Research Center and by a research grant from Madame Olga Klein–Astrachan.}
\\Weizmann Institute of Science\\\texttt{gal.yona@weizmann.ac.il}}
\begin{document}

\maketitle

\begin{abstract}
An agnostic PAC learning algorithm finds a predictor that is competitive with the best predictor in a benchmark hypothesis class, where competitiveness is measured with respect to a given loss function. 
However, its predictions might be quite sub-optimal for structured subgroups of individuals, such as protected demographic groups.
Motivated by such fairness concerns, we study ``multi-group agnostic PAC learnability'': fixing a measure of loss, a benchmark class $\H$ and a (potentially) rich collection of subgroups $\G$, the objective is 
 to learn a single predictor such that the loss experienced by every group $g \in \G$ is not much larger than the best possible loss for this group within $\H$.   Under natural conditions, we provide a characterization of the loss functions for which such a predictor is guaranteed to exist. For any such loss function we construct a learning algorithm whose sample complexity is logarithmic in the size of the collection $\G$. Our results unify and extend previous positive and negative results from the multi-group fairness literature, which applied for specific loss functions.
\end{abstract}

\section{Introduction}
\label{sec:intro}
Machine learning tools are used to make and inform increasingly consequential decisions about individuals. Examples range from medical risk prediction to hiring decisions and criminal justice. Automated classification and risk prediction come with benefits, but they also raise substantial societal concerns. One prominent concern is that these algorithms might discriminate against protected and/or disadvantaged groups. In particular, a learned predictor might perform differently on a protected subgroup compared to the general population. The growing literature on algorithmic fairness has studied different concerns. Many works aim to ensure parity or balance between demographic groups, e.g. similar rates of positive predictions or similar false positive or false negative rates \cite{hardt2016equality,kleinberg2016inherent}. Other works consider accuracy guarantees, such as calibration \cite{dawid1982well}, for protected groups. Protections at the level of a single group might be too weak \cite{dwork2012fairness}, and recent works have studied extending these notions to the setting of multiple overlapping groups \cite{hkrr,kearns2018preventing}.

In this work, we focus on the setting of (supervised) agnostic learning \cite{kearns1994toward}. Given an i.i.d. training set of labeled data, the goal is to learn a predictor $h$ that performs well on the underlying distribution. Performance is measured by a loss function, and with respect to a fixed class $\H$: the loss incurred by the predictor $h$ should be competitive with the best predictor in $\H$. To capture a wide variety of settings, we aim to be quite general in our treatment of different loss functions. With fairness in mind, the agnostic learning paradigm raises a fundamental  concern: since the predictor's loss is measured over the entire underlying distribution, it might not reflect the predictor's performance on sub-populations such as protected demographic groups. Indeed, it has been demonstrated that standard machine learning tools, when applied to standard data sets, produce predictors whose performance on protected demographic groups is quite poor \cite{BuolamwiniG18}.

Motivated by these concerns, we study {\em multi-group} agnostic learning. For a rich collection $\G$ of (potentially) overlapping groups, our goal is to learn a single predictor $h$, such that the loss experienced by every group $g \in \G$ (when classified by $h$) is not much larger than the loss of the best predictor {\em for that group} in the class $\H$. We emphasize that this should hold for all groups in $\G$ simultaneously.

To see how this objective is different from the usual agnostic PAC, consider the simple example in which $\H$ is the class of hyperplanes and we have two subgroups $S,T \subseteq \X$. Suppose that the data is generated such that every group $g$ has a hyperplane $h_g$ that has very low error on it (but that these are different, so e.g. $h_T$ has large loss on $S$ and vice versa). This means that there is no classifier $h \in \H$ that perfectly labels the data. If $S$ is small compared to $T$, then the agnostic learning objective could be satisfied by $h_T$, the optimal classifier for $T$. For \emph{multi-group} agnostic PAC, the fact that there is some other classifier in $\H$ that perfectly labels $S$ serves to disqualify $h_T$ (more generally, it could be the case that no $h \in \H$ will be multi-PAC). This also highlights that the multi-group objective becomes challenging when the groups in question are intersecting (if the groups are disjoint, we can combine the optimal classifiers for each group \cite{dwork2017decoupled}). 

A recent work by Blum and Lykouris \cite{blum2019advancing} studied this question in an online setting with sequential predictions. Our focus is on the batch setting. They showed that (for every collection of groups and every benchmark hypothesis class) it is possible to achieve competitive loss for all groups, so long as the loss function is {\em decomposable}: the loss experienced by each group is an average of losses experienced by its members (this result also applies to the batch setting). On the other hand, they showed a loss function (the average of false negative and false positive rates), for which the objective is infeasible even in the batch setting. Since this loss is non-decomposable (the false positive rate of a classifier depends on the negative rate, which is a property of the entire distribution in question), one might conjecture that the multi-group objective is only possible for decomposable losses. However, this conjecture is false: For example, calibration \cite{dawid1982well} is a non-decomposable loss that is compatible with the multi-group objective. Indeed, \cite{hkrr} propose an algorithm that guarantees predictions are calibrated across groups (in other words, the calibration loss for each $g \in \G$ is close to zero). In particular, the output of this algorithm would satisfy the multi-group agnostic PAC requirement (with respect to the calibration loss, and for every hypothesis class).

\subsection{Our contributions}

Motivated by these observations, in this work we formalize two main questions:

\begin{enumerate}
    \item We define a loss function as (multi-group) \emph{compatible} if it is ``appropriate'' to use with our objective, in the sense that for every hypothesis class $\H$ and collection of groups $\G$, there exists a hypothesis $h$ that is competitive with $\H$ for every group in $\G$. What makes a loss function compatible? Previous works provide several positive and negative results, but there was no clear characterization of compatibility.
    \item Is it always the case that for such ``compatible'' losses, a multi-group predictor can also be found using a finite number of samples? In other words, is there a separation between multi-group {\em compatibility} and multi-group {\em learnability}? 
\end{enumerate}

Our main technical contributions answer both questions:

\begin{enumerate}
    \item We prove a partial information-theoretic characterization of the compatible loss functions.
    
    \item For any such loss function that also satisfies a natural uniform convergence property, we show an algorithm that, for any specified finite collection $\G$ and finite hypothesis class $\H$, learns a multi-group agnostic predictor from labeled data. The sample complexity is logarithmic in the sizes of $\G$ and $\H$. Our algorithm is derived by a reduction to {\em outcome indistinguishability} (OI), a learning objective recently introduced by Dwork {\em et al.} \cite{dwork2020outcome}, drawing a new connection between OI and agnostic learning. This shows that (under minimal assumptions on the loss function), multi-group compatibility implies multi-group learnability.

\end{enumerate}

In slightly more detail, 
we characterize the compatible loss functions assuming an additional {\em unambiguity} property (we refer to the characterization as ``partial'' because of this assumption): we assume that once we fix a single individual and specify the distribution of their label, there is a unique prediction that minimizes the loss for that individual. We view  this as a very natural assumption on the loss function, see the discussion following Definition \ref{def:unambiguity}. We show that if a loss function is compatible and unambiguous, then for each individual, the optimal prediction can be obtained by a fixed ``local'' function $f$, whose output only depends on the features and on the marginal distribution of that individual's label. The point is that the function $f$ doesn't depend on the global distribution, but the predictions that it specifies still minimize the loss for that distribution. We call loss functions that satisfy this property {\em $f$-proper}, and we show that (under the unambiguity assumption) being \nice is {\em equivalent} to multi-group agnostic PAC compatibility. 

We then construct a universal multi-group agnostic PAC learning algorithm for any $f$-proper loss function that also satisfies a minimal uniform convergence property (this is necessary for finite sample complexity). The learning algorithm works for any specified finite hypothesis class $\H$ and any finite collection of groups $\G$, and its sample complexity is logarithmic in $|\H|$ and  $|\G|$. The algorithm is obtained via a reduction from multi-group agnostic PAC learning to the recently studied task of outcome indistinguishable learning \cite{dwork2020outcome}. Beyond unifying previous work in \emph{multi-group fairness}, this result can be thought of as a multi-group analogue for the known result that every PAC learnable class is learnable via empirical risk minimization.

\subsection{Related work}

The literature on algorithmic fairness is broad and growing. Most pertinent to our work is the literature on fairness notions that aim to stake a middle-ground between the strong, individual-level semantics of \emph{individual fairness} notion \cite{dwork2012fairness} and the weaker but useful \emph{group fairness} notions \cite{hardt2016equality, kleinberg2016inherent}. We broadly refer to such notions as \emph{multi-group} notions. Some of these works are geared towards guaranteeing parity: equalizing some statistic across the (possibly exponentially-many) subgroups in $\G$. For example, \cite{kearns2018preventing} study multi-group versions of both Equality of Opportunity and Demographic Parity notions \cite{hardt2016equality}. We note however, that as the collection $\G$ becomes richer and richer, it might be the case multi-group parity can only be obtained via trivial predictions or otherwise undesirable behaviour, such as intentionally making worse predictions on some groups to achieve parity \cite{chen2018my}.
A different line of works consider guarantees that are not inherently in conflict with accuracy, e.g. because the Bayes optimal predictor always satisfies multi-group fairness (regardless of $\G$). Notable examples include multi-calibration \cite{hkrr} and multi-accuracy \cite{kim2019multiaccuracy}, with subsequent works studying 
extensions in ranking \cite{dwork2019learning}, regression \cite{jung2020moment} and online learning \cite{gupta2021online}. As discussed above,  Blum and Lykouris \cite{blum2019advancing} study multi-group agnostic PAC learning (which, depending on the loss function, is often similarly aligned with accuracy) in the online setting. 
In the batch setting, their approach for learning a  multi-PAC predictor is via the mixture of experts approach \cite{jacobs1991adaptive}, where  the ``experts'' in question are a set of $\card{\G}$ predictors, with each $h_i$ being the optimal predictor for group $g_i \in \G$ in the class $\H$. However, this approach fails to produce a multi-PAC predictor when the loss is non-decomposable, such as calibration (this happens even though, by \cite{hkrr}, such multi-fair predictors exist and can be found in sample complexity that scales with $\log\cG$).

\paragraph{Discussion.} The strength of the multi-group agnostic PAC guarantee depends on  both the choice of subgroups $\G$ and the choice of benchmark class $\H$. As in the majority of the multi-group fairness literature \cite{hkrr, dwork2019learning, kim2019multiaccuracy}, the guarantee might be weak if the groups are not sufficiently descriptive (either due to the features or the choice of the class $\G$ itself). In Similarly, the benchmark class $\H$ could be insufficient (again, due to the features or the choice of the functions in $\H$). Both of these choices could be manipulated by a decision-maker with an intention to discriminate against a certain subgroup, e.g. by deleting features that will allow this group to be defined within $\G$ or by intentionally choosing $\H$ to only contain bad classifiers for this group, if it is in $\G$.  In either case, this highlights that the multi-group agnostic PAC guarantee should only be interpreted w.r.t the choice of $\G$ and $\H$, and that these choices should be scrutinized separately.

\paragraph{Organization.} The rest of this manuscript is organizes as follows. Section \ref{sec:prelims} covers the background on our general formulation of loss functions, as well as the learning paradigms (Agnostic PAC and Outcome Indistinguishability). In Section \ref{sec:multi} we formalize the notions of compatibility and learnability in the multi-group sense. Section \ref{sec:equiv} contains our main result, the equivalence between the two notions. Full proofs are deferred to the appendices.

\section{Preliminaries}
\label{sec:prelims}

\textbf{Setup and notation.} We consider binary classification problems, where $\X \subseteq \R^d$ denotes a feature space and $Y = \set{0,1}$ the target labels. As a general convention, we use $\D$ to denote distributions over $\X \times Y$ and $\D_X$ for  the marginal distribution of $\D$ over $\X$. The support of a distribution (w.r.t $\X$) is $\supp(\D) = \supp(\D_X)$. 
We will sometimes be interested in distributions over $\X \times Y$ for which $\card{\supp(\D)} =1$, i.e. distributions that are supported on a single element $x \in \X$. 
We refer to these as ``singleton distributions''. 
For a distribution $\D$ and an element $x \in \X$, we use $\D_x$  to denote the singleton distribution on $\X \times \Y$ obtained by restricting  $\D$ to $X = x$. A predictor is a mapping $h: \X \to [0,1]$, where $h(x)$ is an estimate for the probability that the label of $x$ is 1. We will sometimes consider the special case of classifiers (binary predictors), whose range is $\set{0,1}$. A hypothesis class is a collection of predictors, and is denoted by $\H$. A subgroup is some subset of $\X$. A collection of subgroups is denoted by $\G$. For this work we generally assume $\H$ and $\G$ are finite (but possibly exponentially large in $d$). For a distribution $\D$ and a group $g \subseteq \X$, we use $\D_g$ to denote the distribution on $\X \times Y$ obtained by restricting $\D$ to $x \in g$.

\subsection{General loss functions}
\label{sec:losses}
A loss function $L$ is some mapping from a distribution $\D$ and a predictor $h$ (technically, its' restriction to $\D$) to $[0,1]$. We are typically interested in how $L$ changes as we keep the first argument (the distribution) fixed, and only change the second argument (the predictor in question). We thus typically use $L_\D(h)$ to denote the loss of $h$ w.r.t. a distribution $\D$. For a sample $S = \set{(x_i, y_i)}_{i=1}^m$ we use $L_S(h)$ to denote the empirical loss, calculated as $L_{\hat{\D}}(h)$, where $\hat{\D}$ is the empirical distribution defined by the sample $S$. 
Note that this setup is extremely general, and assumes nothing about the loss (except that it is bounded and can't depend on what happens outside $\D$). In machine learning it is common to consider more structured losses, in which $L_\D(h)$ is the expected loss of $h$ on a random example drawn according to $\D$. We refer to such structured losses as \emph{decomposable} losses. 

\begin{definition}[Decomposable losses]
 A loss function $L$ is \emph{decomposable} if there exists a function $\ell: X \times Y \times [0,1] \to [0,1]$ such that for every distribution $\D$ and classifier $h$, 
$L_\D(h) = \E_{(x,y) \sim \D}[\ell(x, y, h(x))]$.
\end{definition}

For example, for binary classifiers a standard decomposable loss is the 0-1 loss, in which  $\ell(x,y,h(x))=\textbf{1}[h(x)\neq y]$. For predictors, an example of a standard decomposable loss is the squared loss, in which $\ell(x,y,h(x))=(h(x)-y)^2$. 

\paragraph{Beyond decomposable losses.} While decomposable losses are standard and common, there are many loss functions of interest that don't have this form -- especially in the literature on algorithmic fairness. For this reason, we focus on a general notion of loss functions (which does not explicitly assume losses are  decomposable) in our exploration of multi-group agnostic PAC learning. 
Notable examples of such losses, as used in the algorithmic fairness literature, include the following notions: 

\begin{itemize}

\item \textbf{Calibration} \cite{hkrr, kleinberg2016inherent, chouldechova2017fair, shabat2020sample}: A predictor is \emph{calibrated} if for every value $v \in [0,1]$, conditioned on $p(x)=v$, the true expectation of the label is close to $v$. Intuitively, this means that the outputs of the predictor can reasonably be thought of as probabilities, and hence is a fundamental requirement in the literature on forecasting \cite{dawid1982well, foster1998asymptotic}. 
For example, in weather prediction, calibrated forecasts ensure that, out of all the days on which the forecaster predicted say 0.8, it really rained on 80\% of them. Calibration loss measures the extent to which a predictor is miscalibrated; e.g., the \emph{expected calibration error} \cite{kumar2018trainable} measures the deviation from calibration w.r.t a value $v$ drawn from the distribution induced by the prediction in questions. This loss is not decomposable because it is a global function of the predictions, not a property of the prediction for a single $x \in \X$.
    
\item \textbf{One-sided error rates} \cite{hardt2016equality, chouldechova2017fair, blum2019advancing, blum2019recovering, kearns2018preventing}: The \emph{false positive rate} (similarly, false negative rate) measures the probability of a random example being labeled as $h(x)=1$, conditioned on the true label being $y=0$. This isn't a decomposable loss because the exact contribution of a single misclassification depends on the frequency of the negative labels, which is a global property.

\item \textbf{Individual fairness }\cite{dwork2012fairness, yona2018probably}: Individual fairness (IF) requires that individuals that are considered similar w.r.t the task at hand are treated similarly by the classifier. Similarity is specified by a metric $d: \X \times \X \to [0,1]$. If $d(x,x')\approx 0$ ($x,x'$ are similar), then it should be the case that $h(x) \approx h(x')$. An IF loss may quantify the expected violation of this requirement, e.g. over a draw of a pair $(x,x')$ i.i.d from $\D$. This objective isn't decomposable because the requirement w.r.t a single $x \in \X$ depends on the extent to which there are other similar $x'$ in $\supp(\D)$. 

 \end{itemize}
 
We note that both the latter two losses (false positive rate and Individual fairness) are typically not interesting on their own -- e.g. because the constant classifier $h \equiv 0$ minimizes them. Typically we are interested in these losses as either an additional constraint on an existing objective, or paired with an additional loss (e.g. IF + accuracy based loss, or false positive rate + false negative rate).

  In the rest of this section we continue to specify two minimal conditions for the losses we consider. We will use the notation $\Loss$ for the class of losses that satisfy both conditions. 
The first condition is \emph{unambiguity}. It guarantees that  distributions over a single example $x$ have a unique loss minimizer.

\begin{definition}[Unambiguity]
\label{def:unambiguity}
A loss $L$ is unambiguous if for every singleton distribution $\D_x$ over $\X \times Y$, there is a unique prediction $h(x)$ that minimizes the loss.  That is, $\card{\arg\min_{h(x)\in [0,1]}L_{\D_x}(h)} = 1$. 
\end{definition}

Standard decomposable losses satisfy this condition because the function $\ell(h(x),y)$ typically has a unique minimum w.r.t its first argument. For example when $\ell$ corresponds to the squared loss the optimal labeling is $h(x)=\E \sbr{y\vert x}$ and when $\ell$ corresponds to the expected 0-1 loss it is $h(x)=1\sbr{\E \sbr{y\vert x}\geq 0.5}$. 
With respect to the fairness-motivated losses mentioned above, unambiguity fails w.r.t  individual fairness and the one-sided error rates (exactly because they can both be minimized by the constant all-zeroes classifier, regardless of the true distribution). But as mentioned above, these losses are rarely interesting on their own, and combined losses (individual fairness with an unambiguous loss, or an average of false positive rate and false negative rate) are unambiguous. In other words, all losses that are of practical interest to us are unambiguous. 

The second condition we will require is that the empirical risk $L_S(\cdot)$ can really be used to approximate the true risk $L_\D(\cdot)$, for a sufficiently large sample $S$. To build up to this notion we first recall the standard definition of  uniform convergence for hypotheses classes.

\begin{definition}[Uniform Convergence for hypotheses classes]
 We say that a hypothesis class $\H$ has
the uniform convergence property (w.r.t. a domain $X\times Y$ and a loss function $L$) if
there exists a function $m^{UC}_\H : (0, 1)^2 \to \N$ such that for every $\eps, \delta \in (0, 1)$ and
for every probability distribution $\D$ over $X\times Y$, if $S$ is a sample of $m \geq m^{UC}_\H (\eps, \delta)$
examples drawn i.i.d. according to $\D$, then, with probability of at least $1- \delta$, $\forall h \in \H: \quad \card{L_S(h) - L_D(h)} \leq \eps$.
\end{definition}

In our context, we will be interested in uniform convergence as a property of the \emph{loss function}. We will say that a loss $L$ has  uniform convergence (w.r.t finite classes) with sample complexity $m^{UC}_L: (0,1)^2 \times \N \to \N$ if  every finite class $\H$
has the uniform convergence property w.r.t $L$ with sample complexity $m^{UC}_\H(\eps, \delta) \leq m^{UC}_L(\eps, \delta, \cH).$ Specifically, we will be interested in losses that have the uniform convergence property with sample complexity that depends polynomially on $1/\eps, 1/\delta$ and $\log\card{\H}$. This gives rise to the following definition:

\begin{definition}[Uniform convergence for loss functions]
\label{def:uniform-convergence}
A loss $L$ has the uniform convergence property (w.r.t finite classes) with sample complexity  $m^{UC}_L: (0,1)^2 \times \N$ if there exists a polynomial $f: \R^3 \to \N$ such that for every $\eps, \delta \in (0,1)$ and $k \in \N$, 
 \begin{equation*}
     m^{UC}_L(\eps, \delta, k) \triangleq \max_{\H: \, \cH = k}m^{UC}_\H(\eps, \delta) \leq f(1/\eps, 1/\delta, \log(k))
 \end{equation*}

\end{definition}

The uniform convergence property is satisfied by any decomposable loss function. This follows by a combination of Heoffding's bound (for a single $h$) and a union bound to get a simultaneous guarantee for every $h \in \H$.  Out of the fairness-motivated losses we discussed above, only the loss $L_\D(h) = a\cdot \mathtt{FPR}_\D(h) + b\cdot \mathtt{FNR}_\D(h)$ doesn't have uniform convergence, as we prove in Appendix \ref{appendix:fpr}. For calibration, 
uniform convergence follows as a special case of the bounds in \cite{shabat2020sample}; for individual fairness, the argument is similar to the standard argument for decomposable losses, only this time the concentration argument is w.r.t pairs of samples from $\D$ \cite{yona2018probably}.

To summarize, we have defined a collection of ``reasonable`` losses $\Loss$ as all loss functions that are both unambiguous and have the uniform convergence property. We have argued that of the loss functions we discussed, this only rules out the one-sided error rate loss; see Figure \ref{fig:losses} for an overview.

\begin{figure}
    \centering
    \begin{tabular}{cccc}
\multicolumn{3}{c}{} \\
\toprule
\text{loss} &   \text{notation} & \text{in}\, $\Loss$?   \\
\midrule
decomposable   &    $L^{\mathtt{\ell}}_\D(h)$ &    \cmark \\
calibration &    $L^{\mathtt{C}}_\D(h)$ &    \cmark \\
IF + decomposable     &   $a\cdot L^{\mathtt{IF}}_\D(h) +  b\cdot L^{\mathtt{\ell}}_\D(h)$ & 
\cmark \\
error rates    &    $a \cdot L^{\mathtt{FPR}}_\D(h) + b \cdot L^{\mathtt{FNR}}_\D(h)$ & \xmark  \\

\bottomrule
\end{tabular}
    \caption{Summary: loss functions and the set $\Loss$ (all losses that are unambiguous and have the uniform convergence property.)}
    \label{fig:losses}
\end{figure}

\subsection{Learning paradigms}

In this work we draw a connection between an extension of the classic notion of agnostic PAC learnability in the presence of multiple groups and the recently introduced complexity-theoretic inspired framework of Outcome Indistinguishability \cite{dwork2020outcome}, which builds on previous work in algorithmic fairness \cite{hkrr}. In this section we review both of these learning paradigms, starting with agnostic PAC learnability. For consistency, we give the definition w.r.t finite classes.

\begin{definition}[Agnostic-PAC learnability]
\label{def:agnostic-pac}
A hypothesis class $\H$ is agnostic PAC learnable with respect to a set $X \times Y$ and a
loss function $L$ if there exist a function $m_\H : (0, 1)^2 \to \N$
and a learning algorithm with the following property: For every $\eps, \delta \in (0,1)$
and for every distribution $\D$ over $\X \times Y$, when running the learning algorithm on
$m \geq m_\H(\eps, \delta)$ i.i.d. examples generated by $\D$, the algorithm returns $h$ 
such that, with probability of at least $1-\delta$ (over the choice of the $m$ training
examples),
\begin{equation}
\label{eqn:pac}
    L_\D(h) \leq L_\D(\H) + \eps
\end{equation}
Additionally, the sample complexity  $m_\H$ should be polynomial in $1/\eps, 1/\delta$ and in $\log(\cH)$.
\end{definition}

\paragraph{Outcome indistinguishability.} A predictor $\pt: \X \to [0,1]$ can be viewed as providing a generative model for outcomes, where for $x \in \X$ a binary outcome is sampled as  $y \sim \text{Ber}(\pt(x))$. Given a distribution $\D$ and a predictor $\pt$, we define the ``modeled distribution'' $\tilde{\D} = \D(\pt)$ as the distribution over $\X \times Y$ obtained by first drawing $x$ according to the marginal distribution of $\D$ on $\X$ and then labeling it as $y \sim \text{Ber}(\pt(x))$. The framework of 
Outcome Indistinguishability (OI) aims to guarantee that outcomes produced by $\pt$ are indistinguishable from the true outcomes under $\D$. This guarantee is formulated with respect to a fixed class $\A$ of distinguishers, and the requirement is that every distinguisher $A \in \A$ behaves similarly on samples from the real distribution $\D$ and on samples from the ``modeled distribution'' $\tilde{\D}$. 

As discussed and studied in \cite{dwork2020outcome}, there are several different ways to instantiate this framework based on the power of the distinguishers. In particular, two axes of variation are (i) the input to the distinguisher (single sample vs multi-sample) and (ii) the access level it receives to the predictor $\pt$ in question. In this work, we focus on multi-sample Sample-Access OI distinguishers, where the inputs to  each distinguisher $A \in \A^k$ are $k-$tuples of the form $\set{(x_i, y_i, \pt_i)}_{i=1}^k$, where for every $i \in [k]$, $(x_i, y_i)$ is sampled from either $\D$ or $\tilde{\D}$:

\begin{definition}[Outcome Indistinguishability \cite{dwork2020outcome}]
Fix a distribution $\D$, a  collection of distinguishers $\A^k$ and $\tau > 0$. A predictor $\pt: \X \to [0,1]$ satisfies $(\A^k, \tau)$-OI if for every $A \in \A^k$, 
 \begin{multline*}
   \card{\Pr_{\{(x_{i},y_{i})\}_{i=1}^{k}\sim \D^{k}}[A(\{(x_{i},y_{i},\tilde{p}(x_{i})\}_{i=1}^{k})=1]-  \Pr_{\{(x_{i},y_{i})\}_{i=1}^{k}\sim\tilde{\D}^{k}}[A(\{(x_{i},y_{i},\tilde{p}(x_{i})\}_{i=1}^{k})=1]} \leq\tau
 \end{multline*}
\end{definition}

Much like the definition of (regular) PAC learning, we say that an algorithm \emph{learns} multi-sample OI if upon receiving sufficiently many i.i.d samples from the distribution, 
 it is guaranteed to return a predictor  that is ``probably approximately'' OI.

\begin{definition}[OI learnability]
\label{def:oi}
A family of $k$-sample distinguishers $\A^k$ is multi-sample OI learnable  with respect to  $\X \times Y$ if
there exists a function $m_\A: (0,1)^2 \to \N$ and a learning algorithm with the following property: For every $\tau, \eta \in (0,1)$ and for every distribution $\D$, when running the learning algorithm on $m \geq m_\A(\tau, \eta)$ i.i.d example generated by $\D$, the algorithm returns $h$ such that with probability at least $1-\eta$ (over the choice of the $m$ training examples), $h$ satisfies $(\tau, \A^k)$-OI.

Additionally, the sample complexity $m_\A$ should be polynomial in $1/\tau, 1/\eta$ and in $\log(\A^k)$.
\end{definition}

Dwork \emph{et al.} \cite{dwork2020outcome} showed that every  $k-$sample distinguisher class $\A^k$ is OI-learnable. In our analysis 
 we use the following theorem, which bounds the sample complexity of this algorithm and follows from \cite{dwork2020outcome}.

\begin{theorem}[from ~\cite{dwork2020outcome}]
\label{theorem:oi}
Fix $\A^k$. There exists an algorithm $\mathtt{OI}_{\A^k}$ that satisfies the requirement of Definition (\ref{def:oi}), and whose sample complexity is $O\br{\frac{k\cdot\br{\log|A^{k}|/\eta}}{\tau^4}}$.
\end{theorem}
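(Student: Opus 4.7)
The plan is to run a boosting-style iterative correction. Starting from the trivial initial predictor $\pt^{(0)} \equiv 1/2$, at each round $t$ the algorithm draws samples from $\D$ and searches over all $A \in \A^k$ for one that witnesses a failure of $(\A^k, \tau)$-OI for $\pt^{(t)}$; if no such $A$ is found, $\pt^{(t)}$ is returned, and otherwise $A$ is used to update $\pt^{(t)}$ into a new predictor $\pt^{(t+1)}$ that moves closer (in a potential sense) to the Bayes predictor $p^{\ast}(x) = \Pr_\D[y = 1 \mid x]$.

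The analysis rests on the squared-error potential $\Phi(\pt) = \E_{x \sim \D_X}[(p^{\ast}(x) - \pt(x))^2] \in [0,1]$. The key lemma is that whenever $A \in \A^k$ witnesses a failure of $(\A^k, \tau)$-OI for $\pt^{(t)}$, one can extract from $A$ a bounded advice function $w_A : \X \to [-1, 1]$ satisfying $\E_{x \sim \D_X}[(p^{\ast}(x) - \pt^{(t)}(x)) \cdot w_A(x)] = \Omega(\tau)$. Taking the gradient-style update $\pt^{(t+1)}(x) = \mathrm{clip}_{[0,1]}(\pt^{(t)}(x) + \Theta(\tau) \cdot w_A(x))$, a standard computation then gives $\Phi(\pt^{(t+1)}) \leq \Phi(\pt^{(t)}) - \Omega(\tau^2)$. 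Since $\Phi$ starts bounded by $1$, the algorithm must terminate within $T = O(1/\tau^2)$ rounds.

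For sample complexity, at each round it suffices to estimate, within additive error $\tau/3$, the acceptance probability of every $A \in \A^k$ under both $\D^k$ and $\tilde{\D}^{(t),k}$. A Hoeffding bound applied to draws of $k$-tuples, combined with a union bound over $\A^k$ and over the $T$ rounds (allocating failure probability $\eta/(T |\A^k|)$ to each event), requires $m = O(k \cdot \log(|\A^k| T / \eta) / \tau^2)$ raw samples per round, where the leading $k$ accounts for the size of one $k$-tuple. Summing over rounds and absorbing the logarithmic dependence on $T = O(1/\tau^2)$ inside the $\log$ yields the claimed $O(k \log(|\A^k|/\eta)/\tau^4)$.

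The main obstacle is establishing the potential-decrease lemma with the correct dependence on $k$: a naive coordinate-wise hybrid argument would cost a factor of $1/k$ in the extracted advantage, blowing up both the round count and the overall sample complexity by a factor of $k^2$. Avoiding this requires exploiting the $k$-sample distinguisher as a whole --- e.g.\ via a symmetric aggregation whose per-sample distinguishing signal is preserved without the hybrid loss --- and this is precisely the delicate step that keeps the final dependence on $k$ linear rather than cubic.
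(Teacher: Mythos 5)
The paper does not prove this statement: it is imported wholesale from \cite{dwork2020outcome} (``follows from''), so there is no in-paper argument to compare against and your sketch has to stand on its own. Its skeleton is the standard multiaccuracy/OI boosting template --- potential $\Phi(\pt)=\E_{x\sim\D_X}[(p^{*}(x)-\pt(x))^2]$, an update driven by a violated distinguisher, per-round estimation of all acceptance probabilities to accuracy $O(\tau)$ with a union bound over $\A^k$ and the rounds --- and the bookkeeping in your last-but-one paragraph (samples per round times $O(1/\tau^2)$ rounds) does reproduce the stated $O\br{k\log(\card{\A^k}/\eta)/\tau^4}$ \emph{if} the round bound $O(1/\tau^2)$ holds.

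That is exactly where the proposal has a genuine gap, and you name it yourself without closing it. The entire theorem hinges on the lemma that a $k$-sample distinguisher with advantage $\tau$ yields a bounded advice function $w_A:\X\to[-1,1]$ with $\E_{x}[(p^{*}(x)-\pt(x))\,w_A(x)]=\Omega(\tau)$. For single-sample sample-access distinguishers this is immediate (take $w_A(x)=A(x,1,\pt(x))-A(x,0,\pt(x))$), but for $k$-sample distinguishers the only argument you substantiate is the hybrid argument, which gives correlation $\Omega(\tau/k)$, hence potential drop $\Omega(\tau^2/k^2)$, $O(k^2/\tau^2)$ rounds, and total sample complexity $O(k^3\log(\card{\A^k}/\eta)/\tau^4)$ --- cubic rather than linear in $k$. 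The telescoping trick does not rescue this: summing the $k$ per-coordinate advice functions gives correlation $\tau$ but range $[-k,k]$, which after normalization is again a $\tau/k$ signal, so the ``symmetric aggregation'' you invoke is not a routine fix; indeed, since the $k$-sample distinguishing advantage can genuinely exceed the best per-sample signal by a factor of order $k$ (total variation between product distributions scales up to $k$-fold), a loss-free extraction cannot hold without exploiting additional structure of the distinguishers, and no such argument is given. As written, the proposal proves (modulo routine concentration details) a $k^3$ bound, not the theorem as stated; the decisive step determining the linear dependence on $k$ is precisely the one left as an assertion.
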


\section{Agnostic PAC  with multiple groups}
\label{sec:multi}

The objective of agnostic PAC learning is to output a predictor $h$ that satisfies $L_\D(h) \lesssim L_\D(\H)$ (Equation \ref{eqn:pac}). Putting aside questions of computational complexity and sample complexity, this objective is itself always feasible. In other words, regardless of the loss in question, it can always be obtained using a learner who has full knowledge of the distribution $\D$ and is not constrained in runtime. From an information-theoretic perspective, this makes the interesting question the question of what can be done with finite samples from $\D$ -- hence the focus on agnostic PAC \emph{learnability}.

A multi-group extension of agnostic PAC asks for a predictor that satisfies the above, but simultaneously for every group $g$ in a collection $\G$: $ L_{\D_g}(h) \lesssim L_{\D_g}(\H)$, where $\D_g$ denotes the restriction of $\D$ to samples from $g$.

When $\G$ consists of intersecting groups, however, it is not immediately clear that this objective is always feasible: it might not be satisfied by \emph{any} predictor $h: \X \to [0,1]$. For a simple (but contrived) example, let $h^0, h^1$ denote the all-zeroes and all-ones predictors, and 
consider a loss $L$ that specifies that $L_{\D_S}(h^0)=0$ and $L_{\D_T}(h^1)=0$ (and for any other classifier $h$, the loss of every distribution is always 1). Then the multi-group objective w.r.t $\G= \set{S,T}$ requires that we label the intersection $S \cap T$ as both 1 and 0, which is impossible. 
The following lemma proves that this can be the case even for seemingly natural losses, that are in the class $\Loss$ defined in Section \ref{sec:losses}.

\begin{proposition}
\label{prop:impossibility}
There exists $L \in \Loss$ for which some problem instances (corresponding to a distribution $\D$, class $\H$, subgroups $\G$ and $\eps >0$) don't admit a multi-group agnostic PAC solution. In other words, there is no classifier $h$ for which $ L_{\D_g}(h) \lesssim L_{\D_g}(\H)$ for every group $g \in \G$.
\end{proposition}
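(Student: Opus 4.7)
The plan is to construct an explicit loss $L \in \Loss$ together with a problem instance $(\D,\H,\G,\eps)$ where no predictor $h$ satisfies the multi-group PAC requirement. The key idea is to combine squared loss (whose per-individual optima are local) with a global regularizer of the mean prediction, which induces a group-dependent ``pull'' that cannot be simultaneously satisfied on an overlap.

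First I define the loss: $L_\D(h) = \tfrac{4}{5}\bigl( \E_{(x,y)\sim\D}[(h(x)-y)^2] + (\bar{h}_\D - 1/2)^2 \bigr)$, where $\bar{h}_\D := \E_{x\sim\D_X}[h(x)]$, and the $4/5$ rescaling keeps the range in $[0,1]$. I then verify $L \in \Loss$: for any singleton $\D_x$ we have $\bar{h}_{\D_x} = h(x)$, so both terms become strictly convex functions of the single variable $h(x) \in [0,1]$, yielding a unique minimizer (unambiguity). For uniform convergence, the squared-loss term concentrates by the standard Hoeffding$+$union bound; the empirical mean of $h$ also concentrates uniformly over any finite $\H$, and $(\cdot - 1/2)^2$ is Lipschitz on $[0,1]$, so this extends to the regularizer.

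For the incompatible instance, I take $\X = \{a,b,c\}$ with $\D_X$ uniform and conditional label expectations $\E[y\mid a]=0,\; \E[y\mid b]=1/2,\; \E[y\mid c]=1$, groups $\G = \{S,T\}$ with $S=\{a,b\}$, $T=\{b,c\}$ intersecting at $\{b\}$. A direct gradient computation for $L_{\D_S}$ yields the unconstrained minimizer $h^\ast_S(a)=1/8,\, h^\ast_S(b)=5/8$; by symmetry $h^\ast_T(b)=3/8,\, h^\ast_T(c)=7/8$. Taking $\H = \{h^\ast_S, h^\ast_T\}$ (extended arbitrarily off-support), each group's benchmark $L_{\D_g}(\H)$ coincides with the global minimum $L_{\D_g}(h^\ast_g)$. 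Any candidate $h$ must commit to a single value $h(b) \in [0,1]$ that is at distance at least $1/8$ from one of $5/8, 3/8$; by strong convexity of $L_{\D_g}$ at its minimum, this translates into a strictly positive excess loss $\Omega(1)$ on at least one of the two groups, violating the multi-group PAC requirement for any sufficiently small $\eps$.

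The main obstacle is the strong-convexity step: I need to show that fixing $h(b)$ away from $h^\ast_g(b)$ forces a lower-bounded excess loss uniformly over the remaining free coordinate $h(a)$ (resp.\ $h(c)$). This follows because after optimizing over that free coordinate, the residual loss as a function of $h(b)$ remains a strictly convex quadratic with positive leading coefficient, so the excess above the minimum is at least a fixed constant times $(h(b) - h^\ast_g(b))^2 \geq 1/64$ for whichever group is further from its optimum.
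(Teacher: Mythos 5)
Your construction is correct, and it takes a genuinely different route from the paper's. The paper also works over a three-point domain with two groups overlapping in a single point, but its loss is a weighted combination of an individual-fairness term (with respect to a binary similarity metric) and 0-1 accuracy: the metric ties the overlap point to a point outside the overlap whose label is opposite, so the IF-dominated group wants the overlap labeled $0$ while the accuracy-dominated group wants it labeled $1$, and the argument is a short discrete case analysis against the two constant classifiers. You instead take squared loss plus a global penalty on the mean prediction, $(\bar h_\D - 1/2)^2$; the non-decomposable regularizer shifts each group's optimum at the overlap point ($5/8$ for $S$ versus $3/8$ for $T$), and incompatibility follows from a quantitative strong-convexity (Schur-complement) bound after partially minimizing over the free coordinate. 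Both routes are sound: your membership checks in $\Loss$ go through (on singletons $\bar h_{\D_x}=h(x)$ gives strict convexity, hence unambiguity; Hoeffding plus a union bound plus Lipschitzness of $t\mapsto(t-1/2)^2$ gives uniform convergence), your choice $\H=\set{h^\ast_S,h^\ast_T}$ correctly makes each benchmark equal the group's global minimum, any $h(b)$ is at distance at least $1/8$ from one of $5/8,3/8$, and the partial-minimization lower bound is valid even under the $[0,1]$ constraint since the unconstrained partial minimum only lower-bounds the constrained one. The paper's example buys simplicity and a direct connection to the fairness-motivated losses it catalogues (showing IF-plus-accuracy specifically is incompatible); yours buys explicit constants and shows that incompatibility already arises from a mild, purely statistical global regularizer on an otherwise proper loss, with no fairness-style metric needed.
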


The loss in question is a weighted combination of an individual fairness loss with an accuracy loss. We construct two intersecting groups $S$ and $T$ and a similarity metric $d$ that specifies $d(x,x')=0$ if and only if $x \in S-T$ and $x' \in S \cap T$. However, the true labels of these individuals are different: for $x \in S -T$ the label is $y=0$ but for $x\in S\cap T$ the label is $y=1$. This creates a situation in which group $T$ is optimizing strictly for accuracy (and so it wants the intersection to be labeled as $\hat{y} = 1$) whereas for the group $S$ the dominating term is IF (and so it wants everyone, and in particular the intersection, to be labeled as $\hat{y} =0$). Thus, any classifier that is simultaneously multi-PAC w.r.t both $T$ and $S$ must label the intersection as both $0$ and $1$, which is impossible. See Appendix \ref{appendix:acc-if} for the full proof.

We note that \cite{blum2019advancing} already demonstrated that this impossibility occurs in the batch setting. This motivated them to focus on the 0-1 loss. However, their counter example is for the error-rate type loss we discussed earlier, which is a fixed combination of the false positive rate and false negative rate of a classifier. As noted in Section \ref{sec:losses}, this loss doesn't have the uniform convergence property and is therefore not in $\Loss$. Proposition \ref{prop:impossibility} clarifies that  uniform convergence is not the issue, and there are natural (and otherwise reasonable) losses
that are not ``appropriate'' to use with the multi-group objective (in the  sense that even an approximate solution is not guaranteed to exist). 

In light of this discussion, we proceed to explicitly separate the question of \emph{feasibility} (whether it's always the case that some multi-group solution is guaranteed to exist) from \emph{learnability} 
 (whether we can find it with a ``reasonable'' number of samples). Formally, we define two notions -- compatibility and learnability -- that formalize these concepts. Importantly, both are properties of the loss function in question, taking a universal quantifier over the  class $\H$ and the groups $\G$ (and the other aspects of the problem instance).

\begin{definition}[Multi-PAC compatibility]
\label{def:compatibility}
 We say that a loss $L$ is \emph{multi-PAC compatible} if for every distribution $\D$, class $\H$, subgroups $\G$ and $\eps >0$, there exists $h: \X \to [0,1]$ such that for every group $g\in \G$, $     L_{\D_g}(h) \leq L_{\D_g}(\H) + \eps
$.
\end{definition}

Multi-PAC learnability strengthens the above requirement by asking that such a solution can also be found by a learning algorithm whose sample complexity is constrained to depend inverse-polynomially on the parameters in question and logarithmically on the sizes of $\H$ and $\G$.

\begin{definition}[Multi-PAC learnability]
\label{def:learnability}
We say that a loss $L$ is \emph{multi-PAC learnable}  with sample complexity $m^{gPAC}_L: (0,1)^3 \times \N^2 \to \N$ if there exists a  learning algorithm with the following property: For every $\eps, \delta, \gamma \in (0,1)$, for every finite hypothesis class $\H$, for every finite collection of subgroups $G \subseteq 2^\X$
and for every distribution $\D$ over $\X \times Y$, when running the learning algorithm on
$m \geq m^{gPAC}_L(\eps, \delta, \gamma, \cH, \cG)$ i.i.d. examples generated by $\D$, the algorithm returns $h$
such that, with probability at least $1-\delta$ (over the choice of the $m$ training
examples and the coins of the learning algorithm)  $g \in \G_{\gamma}$, $L_{\D_g}(h) \leq L_{\D_g}(\H) + \eps$, 
where $\G_\gamma \subseteq \G$ is the subset of groups whose mass under $\D$ is at least $\gamma$: $\G_\gamma = \set{g \in \G: \Pr_\D[x\in g] \geq \gamma}$.

Additionally, the sample complexity  $m^{gPAC}_L$ should be polynomial in $1/\eps, 1/\delta, 1/\gamma$ and $\log(\cH), \log(\cG)$.

\end{definition}

\section{Compatibility $\iff$ Learnability via OI}
\label{sec:equiv}

In this section we prove our main result: that for loss functions in $\Loss$,  multi-PAC compatibility implies also multi-PAC learnability. 

\begin{theorem}
\label{thm:compatible-is-learnable}
If a loss $L \in \Loss$ is multi-group compatible (Definition \ref{def:compatibility}), then it is also multi-group learnable (Definition \ref{def:learnability}).
\end{theorem}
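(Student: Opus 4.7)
The plan is to prove the theorem by reduction to outcome indistinguishability (OI), leveraging the compatibility-to-$f$-properness characterization of the paper: for $L \in \Loss$, compatibility implies the existence of a fixed function $f$ such that for every distribution $\D$ the pointwise loss minimizer is $h^{\star}_\D(x) = f(x, p^{\star}_\D(x))$, where $p^{\star}_\D(x) = \Pr_\D[y=1 \mid x]$. Since $f$ depends only on the local label distribution, $h^{\star}_\D$ is also loss-optimal when restricted to every $\D_g$; in particular it dominates every $h \in \H$ on every $\D_g$. Because $p^{\star}_\D$ is not observable, the idea is to learn a surrogate $\pt$ that is OI against a carefully designed distinguisher family and output $h^{\mathtt{alg}} = f \circ \pt$.

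The distinguisher family $\A^k$ is built from threshold tests on empirical group losses. For every triple $(g, h, \theta)$ with $g \in \G$, $h \in \H \cup \{f \circ \pt\}$, and $\theta$ chosen from an $O(\eps)$-spaced grid in $[0,1]$, define a $k$-sample sample-access distinguisher $A_{g,h,\theta}$ that (i) retains the input samples whose feature lies in $g$; (ii) computes the empirical loss of $h$ on this subsample, using the supplied $\pt(x_i)$ to evaluate $h^{\mathtt{alg}}(x_i) = f(x_i, \pt(x_i))$ when needed; and (iii) outputs $1$ iff this empirical loss is at most $\theta$. Setting $k = \tilde{O}(m^{UC}_L(\eps/4, \delta, |\H|) / \gamma)$, uniform convergence for $L$ together with a Chernoff bound on the number of samples landing in each $g \in \G_\gamma$ ensures, with high probability, that the empirical loss approximates both $L_{\D_g}(h)$ and $L_{\tilde{\D}_g}(h)$ to within $\eps/4$. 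Consequently, $\Pr_{\D^k}[A_{g,h,\theta}=1]$ is close to the indicator of $L_{\D_g}(h) \leq \theta$, and analogously under $\tilde{\D}$.

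Now invoke Theorem \ref{theorem:oi} with $\tau = \Theta(\eps)$ to learn $\pt$ satisfying $(\A^k, \tau)$-OI. Since $\log|\A^k| = O(\log|\G| + \log|\H| + \log(1/\eps))$, the OI sample complexity is polynomial in $1/\eps, 1/\delta, 1/\gamma, \log|\H|, \log|\G|$, matching Definition \ref{def:learnability}. For correctness, fix $g \in \G_\gamma$ and any $h \in \H$. Comparing the acceptance probabilities of $A_{g,h,\theta}$ under $\D$ and $\tilde{\D}$ across the grid, and combining with the $\tau$-OI bound, converts into $|L_{\D_g}(h) - L_{\tilde{\D}_g}(h)| \leq \eps/3$; the analogous argument with $h^{\mathtt{alg}}$ in place of $h$ yields $|L_{\D_g}(h^{\mathtt{alg}}) - L_{\tilde{\D}_g}(h^{\mathtt{alg}})| \leq \eps/3$. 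Finally, apply $f$-properness to $\tilde{\D}$: because $\pt$ is by construction the label probability under $\tilde{\D}$, the predictor $h^{\mathtt{alg}} = f \circ \pt$ is pointwise loss-optimal for $\tilde{\D}$ (and hence for $\tilde{\D}_g$), so $L_{\tilde{\D}_g}(h^{\mathtt{alg}}) \leq L_{\tilde{\D}_g}(h)$. Chaining the three inequalities yields $L_{\D_g}(h^{\mathtt{alg}}) \leq L_{\D_g}(h) + \eps$, the desired multi-PAC bound.

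I expect the main obstacle to be coordinating the three error sources --- uniform convergence on a (possibly small) group slice, the OI gap, and the $f$-properness optimality on $\tilde{\D}$ --- while keeping $|\A^k|$ small enough to pay only $\log|\H|+\log|\G|$ in samples; the conditioning on group membership forces $k$ to scale like $1/\gamma$, which then propagates into the final sample complexity bound. A minor technical point is that some distinguishers reference $\pt$ itself (through $h^{\mathtt{alg}}$), but this is precisely what the sample-access OI model permits, by supplying $\pt(x_i)$ alongside each input.
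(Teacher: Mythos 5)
Your proposal is correct and follows essentially the same route as the paper: it imports the paper's characterization (compatibility plus unambiguity $\Rightarrow$ $f$-properness, Lemma \ref{lemma:compatible-is-nice}) and then proves learnability by reducing to multi-sample sample-access OI, with distinguishers indexed by $(g,h)$ that restrict to the samples landing in $g$, evaluate $f(x_i,\pt(x_i))$ from the supplied predictions, take $k$ large enough (scaling as $1/\gamma$ times the uniform-convergence sample complexity for a class of size about $|\H|+1$) so that each $g\in\G_\gamma$ receives enough samples, and invoke Theorem \ref{theorem:oi} to pay only $\log|\H|+\log|\G|$ — exactly the structure of Algorithms \ref{algo:multi-group} and \ref{algo:dist}. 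The one genuine difference is the distinguisher design: you use a grid of per-predictor threshold tests $A_{g,h,\theta}$ and deduce two-sided transfer $|L_{\D_g}(h)-L_{\tilde{\D}_g}(h)|\le \eps/3$ for every $h\in\H\cup\{f\circ\pt\}$ before chaining through $f$-properness on $\tilde{\D}_g$, whereas the paper uses a single pairwise comparison test per $(g,h)$ that accepts iff $L_{S_g}(f_g) < L_{S_g}(h)+2\alpha$; both are sound, your variant costs a factor $O(1/\eps)$ in the number of distinguishers (only an additive $\log(1/\eps)$ in samples) and yields the slightly stronger uniform loss-transfer statement, while the paper's comparison test keeps the distinguisher family minimal and tests precisely the inequality needed. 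The only care point in your version, which you implicitly handle via the grid and the one-sided arguments, is that acceptance probabilities are close to the indicator of $L_{\D_g}(h)\le\theta$ only for thresholds bounded away from the true loss by the uniform-convergence error.
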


Towards proving Theorem \ref{thm:compatible-is-learnable}, we introduce an additional property of  loss functions, which we define below.

\begin{definition}[\nice]
\label{def:nice}
For a function $f: \X \times [0,1] \to [0,1]$, we say that a loss $L$ is \emph{\nice} if for every distribution $\D$ on $\X \times Y$, the classifier $h_\D$ given by $h_\D(x)=f(x,\ygivenx)$ minimizes the loss w.r.t $\D$: $h_\D \in \arg\min_h L_\D(h)$.
\end{definition}

Recall that proper losses (or proper scoring functions) are losses that are minimized by the conditional expectation predictor $x \mapsto \ygivenx$ \cite{buja2005loss}. Definition \ref{def:nice} is a weaker requirement that says that for every distribution a minimizer can be obtained as some \emph{local} transformation of this predictor (i.e. that does not depend on the rest of the distribution).

Recalling that we defined $\Loss$ as precisely all loss functions that satisfy both uniform convergence and unambiguity, we see that Theorem \ref{thm:compatible-is-learnable} follows as a direct corollary of the following two lemmas.


\begin{lemma}
\label{lemma:compatible-is-nice}
If $L$ is unambiguous (Definition \ref{def:unambiguity}) and multi-group compatible (Definition \ref{def:compatibility}), then $L$ is \nice (Definition \ref{def:nice}).
\end{lemma}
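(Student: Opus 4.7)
The natural definition is $f(x,p) \triangleq \arg\min_{v \in [0,1]} L_{\D_{x,p}}(v)$, where $\D_{x,p}$ is the singleton distribution on $x$ with $\Pr[y=1]=p$; this is well-defined because of unambiguity. It then suffices to show that for every distribution $\D$, the predictor $h_\D^\star(x) \triangleq f(x,\ygivenx)$ minimizes $L_\D$. I plan to prove this by contradiction, using multi-PAC compatibility as a ``gluing'' mechanism that assembles the pointwise (singleton) minimizers into a global minimizer.

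Suppose for contradiction that some $h^\star$ satisfies $L_\D(h^\star) < L_\D(h_\D^\star)$. For each $x \in \supp(\D)$, pick any hypothesis $h_x^{\mathrm{opt}}$ with $h_x^{\mathrm{opt}}(x) = f(x,\ygivenx)$, so that $L_{\D_x}(h_x^{\mathrm{opt}}) = \min_v L_{\D_x}(v)$ by construction. Invoke multi-PAC compatibility on $\H = \{h^\star\} \cup \{h_x^{\mathrm{opt}} : x \in \supp(\D)\}$ and $\G = \{\X\} \cup \{\{x\} : x \in \supp(\D)\}$ with slack $\eps$. This produces a single predictor $h_\eps$ satisfying (i) $L_\D(h_\eps) \leq L_\D(h^\star) + \eps$ and (ii) $L_{\D_x}(h_\eps) \leq \min_v L_{\D_x}(v) + \eps$ for every $x \in \supp(\D)$.

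Letting $\eps \to 0$ and passing to a convergent subsequence on $\supp(\D)$, unambiguity together with (ii) should force $h_\eps(x) \to h_\D^\star(x)$ for each $x$ in the support. Since $L_\D$ depends only on the restriction of a predictor to $\supp(\D)$, combining this pointwise convergence with (i) yields $L_\D(h_\D^\star) \leq L_\D(h^\star)$, contradicting the assumption.

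\textbf{Main obstacle.} The nontrivial step is the limiting argument: converting an $\eps$-loss bound into pointwise convergence $h_\eps(x) \to h_\D^\star(x)$, and then converting that pointwise convergence into a loss inequality for $L_\D$. Unambiguity yields uniqueness of the singleton minimizer but no quantitative stability, so mild regularity of the singleton loss $L_{\D_x}(\cdot)$ as a scalar function of $h(x)$ (e.g.\ lower-semicontinuity near its unique minimizer) is what makes the limit go through. I expect this regularity, together with an analogous continuity for $L_\D$ under pointwise perturbations on the support, to be where the technical bulk of the formal proof lies; for losses in $\Loss$ it should be extractable by discretizing $[0,1]$ and applying the uniform convergence property to the resulting finite subclass.
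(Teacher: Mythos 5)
Your proposal is correct and follows essentially the same route as the paper: the same definition of $f$ via the unique singleton-distribution minimizers, the same problem instance ($\G$ consisting of all singletons in $\supp(\D)$ plus the full support, $\H$ consisting of the corresponding singleton-optimal hypotheses plus a global minimizer $h^\star$), and the same contradiction with multi-PAC compatibility. The only divergence is at the step you flag as the main obstacle: the paper does not develop the $\eps \to 0$ subsequence/continuity machinery, but instead argues directly that, by unambiguity, satisfying the multi-PAC guarantee for the singleton groups with arbitrarily small $\eps$ forces the prediction $h^\star_{\D_x}(x)$ at every $x \in \supp(\D)$, whence the glued classifier would violate the guarantee for the global group.
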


\begin{lemma}
\label{lemma:nice-is-learnable}
If $L$ is \nice (Definition \ref{def:nice}) and has the uniform convergence property (Definition \ref{def:uniform-convergence}), then $L$ is multi-group learnable (Definition \ref{def:learnability}).
\end{lemma}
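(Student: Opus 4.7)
The plan is to reduce multi-group PAC learning to outcome indistinguishability, exploiting $f$-properness to convert a single OI-learned predictor into a hypothesis competitive with $\H$ on every group simultaneously. Given samples from $\D$, the algorithm will: (i) invoke the OI learner of Theorem \ref{theorem:oi} with a distinguisher class $\A^k$ tailored to $\H$ and $\G$, obtaining a predictor $\tilde p$; and (ii) output $h(x) = f(x, \tilde p(x))$. The driving observation is that for every $g \in \G_\gamma$, the restricted ``modeled'' distribution $\tilde\D_g$ satisfies $\E_{\tilde\D_g}\sbr{y \mid x} = \tilde p(x)$ for $x \in g$, so by $f$-properness $h$ is a minimizer of $L_{\tilde\D_g}(\cdot)$ over \emph{all} predictors. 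In particular $L_{\tilde\D_g}(h) \leq L_{\tilde\D_g}(h')$ for every $h' \in \H$, and it remains to show that per-group losses under $\D$ and under $\tilde\D$ agree to within $\eps/3$; sandwiching then yields $L_{\D_g}(h) \leq L_{\D_g}(h') + \eps$ for all $h' \in \H$ and all $g \in \G_\gamma$.

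To realize this loss-closeness from OI, I would populate $\A^k$ with distinguishers $A_{g, h', t}$ indexed by $g \in \G$, $h' \in \H \cup \set{\mathsf{self}}$, and a threshold $t$ on a grid of $[0,1]$ of resolution $\Theta(\eps)$. On a $k$-tuple $\set{(x_i, y_i, v_i)}$, the distinguisher $A_{g, h', t}$ keeps the indices $i$ with $x_i \in g$, returns $0$ if fewer than $k_0$ remain, and otherwise outputs $\mathbf{1}\sbr{\hat L_g(h') \geq t}$, where $\hat L_g$ is the empirical loss on the retained samples and $h'(x_i)$ is computed as $f(x_i, v_i)$ when $h' = \mathsf{self}$. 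Choose $k_0 = m^{UC}_L(\eps/10, \eps/10, \card{\H}+1)$ and $k = \Theta(k_0/\gamma)$, so that (a) a Chernoff bound yields at least $k_0$ in-group samples with high probability for every $g \in \G_\gamma$, and (b) uniform convergence on $k_0$ samples from $\D_g$ or $\tilde\D_g$ gives $\eps/10$-accurate loss estimates for any fixed $h'$.

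Applying Theorem \ref{theorem:oi} with OI parameter $\tau = \Theta(\eps)$ then produces a $\tilde p$ that fools every $A_{g, h', t}$. Translating threshold-wise closeness into closeness of expected empirical losses uses the identity $\E\sbr{X} = \int_0^1 \Pr\sbr{X \geq t}\,dt$ for $X \in [0,1]$: summing over the $O(1/\eps)$-size grid gives $\card{\E_{\D^k}\sbr{\hat L_g(h')} - \E_{\tilde\D^k}\sbr{\hat L_g(h')}} = O(\eps)$ for every $g \in \G_\gamma$ and $h' \in \H \cup \set{\mathsf{self}}$, and combining with the uniform-convergence error on both sides yields $\card{L_{\D_g}(h') - L_{\tilde\D_g}(h')} \leq \eps/3$. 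Completing the sandwich proves the guarantee, and since $\log\card{\A^k} = O(\log\card{\G} + \log\card{\H} + \log(1/\eps))$, the sample complexity promised by Theorem \ref{theorem:oi} is polynomial in $1/\eps, 1/\delta, 1/\gamma, \log\card{\H}, \log\card{\G}$ as required.

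The main obstacle I anticipate is synchronizing the error budget: one must choose the OI parameter $\tau$, the grid resolution, the Chernoff slack for in-group counts, the uniform-convergence error, and $k_0$ so that all four sources combine to less than $\eps$, while keeping $k$ (and hence the OI sample complexity from Theorem \ref{theorem:oi}) polynomial in the stated parameters. A related subtlety is that the $\mathsf{self}$ distinguisher must reconstruct its reference predictor solely from the third coordinate $v_i = \tilde p(x_i)$ of each sample, which is exactly why \emph{sample-access} multi-sample OI---where the distinguisher is given $\tilde p(x_i)$ alongside $(x_i, y_i)$---is the right flavor of OI to invoke rather than a generative-only variant.
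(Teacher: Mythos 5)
Your proposal is correct and follows the same reduction as the paper: learn $\tilde{p}$ with the OI learner of Theorem \ref{theorem:oi} against distinguishers indexed by $\G$ and $\H$ that examine in-group empirical losses, set $k_0 = m^{UC}_L(\cdot,\cdot,\card{\H}+1)$ and $k=\Theta(k_0/\gamma)$ so that every $g \in \G_\gamma$ receives enough in-group samples for uniform convergence, invoke $f$-properness on the modeled distribution $\tilde{\D}_g$ (where $\tilde{p}(x)=\E_{\tilde{\D}}[y \mid x]$), and output $f(\tilde{p})$. The one substantive difference is the design of the test inside each distinguisher: the paper's $A_{g,h,\alpha}$ performs the comparison directly, accepting iff $L_{S_g}(f_g) < L_{S_g}(h)+2\alpha$, so the analysis only needs a one-sided transfer of acceptance probability from $\tilde{\D}$ to $\D$; you instead use a grid of threshold distinguishers $\mathbf{1}\sbr{\hat L_g(h') \geq t}$ for $h' \in \H \cup \set{\mathsf{self}}$ and the layer-cake identity to obtain the two-sided closeness $\card{L_{\D_g}(h') - L_{\tilde{\D}_g}(h')} = O(\eps)$ before sandwiching. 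Your variant is sound: the per-threshold OI discrepancies enter the Riemann sum with weight $\Theta(\eps)$, so they do not accumulate, and the event of too few in-group samples has identical probability under $\D^k$ and $\tilde{\D}^k$ because the two distributions share the same marginal on $\X$. It even yields a slightly stronger intermediate statement (each fixed reference loss is preserved between $\D_g$ and $\tilde{\D}_g$), at the cost of an extra $O(1/\eps)$ factor in $\card{\A}$ (harmless, since it enters only logarithmically) and the bookkeeping you yourself flag --- converting the high-probability uniform-convergence guarantee into in-expectation bounds and handling the small-count event inside the expectations. The paper's comparison-based distinguisher avoids the threshold grid and this bookkeeping, which is what makes its proof shorter; otherwise the two arguments are the same reduction.
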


We note that the other direction of Lemma \ref{lemma:compatible-is-nice} is also true (if a loss is $f$-proper, it is also multi-group compatible): this follows immediately, since labeling everyone  in $\supp(\D)$ according to $f$ is optimal -- and in particular satisfies the multi-group requirement for every $\H$. This means that the notion of $f-$proper provides a partial characterization of compatibility 
(up to unambiguity).

The full proofs of Lemmas \ref{lemma:compatible-is-nice} and \ref{lemma:nice-is-learnable} can be found in Appendices \ref{appendix:comp-to-proper} and \ref{appendix:proper-to-learnable}, respectively. In the rest of this section we give an overview of both proofs.

\subsection{Overview of Lemma \ref{lemma:compatible-is-nice}}

Let $L \in \Loss$ be a loss function that is multi-group compatible and unambiguous (we do not use the uniform convergence property here). We show that $L$ is also $f$-proper, where the function $f$ maps a singleton distribution (specified by an input $x$ and the conditional probability that the label is 1) to whatever prediction minimizes the loss on that distribution. In more detail: for an input $x \in \X$ and a value $z \in [0,1]$, let $\D_{x,z}$ be the singleton distribution over $ \{x\} \times \{0,1\}$, where the label is drawn by the Bernoulli distribution $\text{Ber}(z)$. Let $h_{x,z}$ be the predictor that minimizes the loss on this distribution, i.e. $h_{x,z} = \arg\min_{h} L_{\D_{x,z}}(h)$, and recall that by unambiguity, the prediction that minimizes the loss is unique, and so the value $h_{x,z}(x)$ is well defined. We take $f(x,z) = h_{x,z}(x)$.

It remains to prove that $L$ is an $f$-proper loss function. Suppose for contradiction that it is not: i.e., there exists a distribution $\D$ for which $h_\D(x)=f(x,\ygivenx)$ does not minimize the loss, i.e. there exists some predictor $h'$ s.t. $L_{\D}(h') < L_{\D}(h_\D)$. We show this contradicts the multi-group compatibility of $L$. To see this, define a collection of sets that includes all the singletons in the support of $\D$, as well as the global set comprised of the entire support of $\D$. By unambiguity, the singletons all ``want'' to be labeled by $h_D$. On the other hand, the global set wants to be labeled by $h'$. Whatever predictor we pick, the loss on either the global set or of one of the singletons will not be optimal. Thus, for the above collection of groups $\G$ (comprised of all singletons plus the global set), and for the hypothesis class $\H$ that includes $h_{\D}$ and $h'$, it is impossible to obtain predictions that are competitive with $\H$ for all groups in $\G$ simultaneously.

\begin{algorithm}[h]
  \caption{$\mathtt{MultiGroup_{L,f}}(\epsilon,\delta,\gamma,\H,\G)$}
  \label{algo:multi-group}
      \begin{algorithmic}[1]
        \STATE {\bfseries Parameters:} loss function $L$, function $f: \X \times [0,1] \to [0,1]$
        
        \vspace{2mm}
        \STATE {\bfseries Input:} accuracy parameter $\epsilon \in (0,1)$, failure probability $\delta \in (0,1)$, minimal subgroups size parameter $\gamma \in (0,1)$, hypothesis class $\H$, collection of subgroups $\G$. 
        \STATE {\bfseries Output:} A classifier $h$ satisfying the $(\eps, \delta)$-multi-group guarantee w.r.t $\H$ and $\G$
        \vspace{2mm}

        \STATE Set $\eps'=\alpha = \eps/4$ and $\delta'=\eta = \tau = \delta/4$.

        \STATE{Set $k_\G = m^{UC}_L\br{\eps', \delta', \cH + 1}$}.
        
        \STATE Set $ k=10\cdot\frac{1}{\gamma}\cdot\log\frac{1}{\delta'}\cdot k_\G $.
        
        \STATE Let $\A = \set{A^{L, f, k}_{ g,h,\alpha} \,\, \vert \,\, g\in \G, h\in \H} $ be a collection of distinguishers, as defined in Algorithm \ref{algo:dist}.

        \STATE Invoke OI as a sub-routine to learn $\tilde{p}\leftarrow\mathtt{OI}(\tau,\eta,\A)$.
        
        \RETURN{$f(\tilde{p})$}
        
      \end{algorithmic}
\end{algorithm}

\begin{algorithm}[h!]
  \caption{$A^{L, f, k}_{ g,h,\alpha}$ (multi-sample Sample-Access OI distinguisher)}
  \label{algo:dist}
      \begin{algorithmic}[1]
        \STATE {\bfseries Parameters:} number of samples $k \in \N$, group $g \subseteq \X$, classifier $h: \X \to [0,1]$, loss function $L$, function $f: \X \times [0,1] \to [0,1]$
        \vspace{2mm}
        \STATE {\bfseries Input:} $\set{(x_i, y_i, p_i)}_{i=1}^k$, where $x_i \in \X$, $y_i \in \set{0,1}$ and $p_i \in [0,1]$
        \STATE {\bfseries Output:} A binary output denoting Accept/Reject
        \vspace{2mm}
        
        \STATE $I_g = \set{i: x_i \in g}$ 
        \STATE $S_g = \set{(x_i, y_i)}_{i \in I_g}$
        
        \STATE Define a predictor $f_g$ as
        \begin{equation}
        \label{eqn:f_g}
        f_g(x) = 
        \begin{cases}
         f(x_i, p_i) & \exists i \in [k] \,\, \text{such that} \,\,  x=x_i \\
         0 & \text{otherwise}
        \end{cases}
        \end{equation}
        
        \IF{$L_{S_g}(f_g) < L_{S_g}(h) + 2\alpha$}
            \RETURN{1}
        \ENDIF
        \RETURN{0}

      \end{algorithmic}
\end{algorithm}

\subsection{Overview of Lemma \ref{lemma:nice-is-learnable}}

Given a loss function $L$ that is $f$-proper and has the uniform convergence property, we want to construct a multi-group agnostic PAC learning algorithm that works for any (finite) hypothesis class $\H$ and (finite) collection of groups $\G$. The algorithm will work by a reduction to the task of OI learning (see above): namely, we construct a collection $\A$ of distinguishers, and show that any predictor $\pt$ that is OI w.r.t this collection can be used to derive a multi-group agnostic predictor $h$. In particular, we show that if $\pt$ is OI (w.r.t $\A$), then $\tilde{h}(x) = f(x,\pt(x))$ is a multi-group agnostic predictor (recall that $f$ is the local transformation for the $f$-proper loss function $L$). The collection of distinguishers depends on the loss function $L$, on the hypothesis class $\H$ and on the collection of groups $\G$. This reduction, together with the OI learning algorithm of Theorem \ref{theorem:oi} (from \cite{dwork2020outcome}), gives a ``universal'' multi-group agnostic learning algorithm for any $f$-proper loss function. The algorithm is described in Figure \ref{algo:multi-group}.

It remains to construct the family of distinguishers $\A$, and to prove the reduction. Towards this, fix a group $g \in \G$ and fix a hypothesis $h \in \H$. We want to guarantee that the loss of the hypothesis $\tilde{h}(x) = f(x,\pt(x))$ is competitive with the loss of $h$, where both losses are measured on the distribution $\D_g$ over members of the group $g$. We begin by observing that this is true when the labels are drawn by $\pt(x)$ (as in the distribution $\tilde{\D}$). We will use OI (with an appropriately constructed distinguisher) to ensure that it is also true for the ``real'' distribution $\D_g$. 

In more detail, since $L$ is an $f$-proper loss function, we have: $L_{\tilde{\D}_g}(\tilde{h}) \leq L_{\tilde{\D}_g}(h)$,
because in $\tilde{\D}$ the labels are indeed generated by $\pt$, i.e. $\pt(x) = E_{\tilde{\D}} [y | x]$. By uniform convergence, this will remain true even if we consider the empirical loss over a (sufficiently large) i.i.d. sample from $\tilde{\D}_g$. With this in mind, we define a distinguisher $A^k_{g,h,\alpha}$, which takes as input $k$ samples $\{(x_i,y_i,p_i)\}$ and checks whether, for the samples where $x_i \in g$, it is true that the loss obtained by predicting $f(x_i,p_i)$ for each $x_i$ is competitive with the loss obtained by $h$ on those samples (up to an additive factor of $\alpha$). By the above discussion, when the labels $y_i$ values are drawn by $\text{Ber}(\pt(x_i))$, and assuming that there are sufficiently many samples in $g$ to guarantee uniform convergence for the loss $L$, the distinguisher will accept with high probability. See Figure \ref{algo:dist} for a full description of the distinguisher.

Now, if $\pt$ is OI w.r.t. a class that includes the distinguisher $A^k_{g,h,\alpha}$, then the distinguisher should accept with similar probabilities when the labeled examples are drawn by $\tilde{\D}$ or by $\D$ (where in both cases $p_i = \pt(x_i)$). I.e., $A^k_{g,h,\alpha}$ should also accept w.h.p. when the labeled examples are drawn by $\D$. This can only happen if the predictor $\tilde{h}$ is competitive with the hypothesis $h$ w.r.t. the distribution $\D_g$, which is exactly the guarantee we wanted from $\tilde{h}$!

The class $\A$ of distinguishers includes such a distinguisher for each $g \in \G$ and $h \in \H$, and thus if $\pt$ is OI w.r.t. $\A$, we conclude that the loss of $\tilde{h}$ is competitive with every $h \in \H$ for every group in $\G$ simultaneously. Note that the distinguishers in $\A$ use multiple samples, and the number of samples must be sufficiently large so that (for any sufficiently-large group $g$) w.h.p. enough of them fall in $g$ to guarantee uniform convergence. 

The sample complexity of the learning algorithm is governed by the sample complexity of OI learning, which is logarithmic in the number of distinguishers. Since the class $\A$ includes $|\G| \cdot |\H|$ distinguishers, the resulting learning algorithm has  sample complexity that is logarithmic in $|\G|$ and in $|\H|$. We note that we need $\G$ and $\H$ to be finite because the known OI learning algorithm works for finite collections of distinguishers.

\paragraph{Acknowledgements.} We thank Cynthia Dwork, Michael P. Kim and Omer Reingold for fruitful discussions throughout this work.

\bibliographystyle{apalike}
\bibliography{refs}

\begin{thebibliography}{}

\bibitem[Blum and Lykouris, 2019]{blum2019advancing}
Blum, A. and Lykouris, T. (2019).
\newblock Advancing subgroup fairness via sleeping experts.
\newblock {\em arXiv preprint arXiv:1909.08375}.

\bibitem[Blum and Stangl, 2019]{blum2019recovering}
Blum, A. and Stangl, K. (2019).
\newblock Recovering from biased data: Can fairness constraints improve
  accuracy?
\newblock {\em arXiv preprint arXiv:1912.01094}.

\bibitem[Buja et~al., 2005]{buja2005loss}
Buja, A., Stuetzle, W., and Shen, Y. (2005).
\newblock Loss functions for binary class probability estimation and
  classification: Structure and applications.
\newblock {\em Working draft, November}, 3.

\bibitem[Buolamwini and Gebru, 2018]{BuolamwiniG18}
Buolamwini, J. and Gebru, T. (2018).
\newblock Gender shades: Intersectional accuracy disparities in commercial
  gender classification.
\newblock In Friedler, S.~A. and Wilson, C., editors, {\em Conference on
  Fairness, Accountability and Transparency, {FAT} 2018, 23-24 February 2018,
  New York, NY, {USA}}, volume~81 of {\em Proceedings of Machine Learning
  Research}, pages 77--91. {PMLR}.

\bibitem[Chen et~al., 2018]{chen2018my}
Chen, I., Johansson, F.~D., and Sontag, D. (2018).
\newblock Why is my classifier discriminatory?
\newblock In {\em Advances in Neural Information Processing Systems}, pages
  3539--3550.

\bibitem[Chouldechova, 2017]{chouldechova2017fair}
Chouldechova, A. (2017).
\newblock Fair prediction with disparate impact: A study of bias in recidivism
  prediction instruments.
\newblock {\em Big data}, 5(2):153--163.

\bibitem[Dawid, 1982]{dawid1982well}
Dawid, A.~P. (1982).
\newblock The well-calibrated bayesian.
\newblock {\em Journal of the American Statistical Association},
  77(379):605--610.

\bibitem[Dwork et~al., 2012]{dwork2012fairness}
Dwork, C., Hardt, M., Pitassi, T., Reingold, O., and Zemel, R. (2012).
\newblock Fairness through awareness.
\newblock In {\em Proceedings of the 3rd innovations in theoretical computer
  science conference}, pages 214--226.

\bibitem[Dwork et~al., 2017]{dwork2017decoupled}
Dwork, C., Immorlica, N., Kalai, A.~T., and Leiserson, M. (2017).
\newblock Decoupled classifiers for fair and efficient machine learning.
\newblock {\em arXiv preprint arXiv:1707.06613}.

\bibitem[Dwork et~al., 2019]{dwork2019learning}
Dwork, C., Kim, M.~P., Reingold, O., Rothblum, G.~N., and Yona, G. (2019).
\newblock Learning from outcomes: Evidence-based rankings.
\newblock In {\em 2019 IEEE 60th Annual Symposium on Foundations of Computer
  Science (FOCS)}, pages 106--125. IEEE.

\bibitem[Dwork et~al., 2020]{dwork2020outcome}
Dwork, C., Kim, M.~P., Reingold, O., Rothblum, G.~N., and Yona, G. (2020).
\newblock Outcome indistinguishability.
\newblock {\em arXiv preprint arXiv:2011.13426}.

\bibitem[Foster and Vohra, 1998]{foster1998asymptotic}
Foster, D.~P. and Vohra, R.~V. (1998).
\newblock Asymptotic calibration.
\newblock {\em Biometrika}, 85(2):379--390.

\bibitem[Gupta et~al., 2021]{gupta2021online}
Gupta, V., Jung, C., Noarov, G., Pai, M.~M., and Roth, A. (2021).
\newblock Online multivalid learning: Means, moments, and prediction intervals.
\newblock {\em arXiv preprint arXiv:2101.01739}.

\bibitem[Hardt et~al., 2016]{hardt2016equality}
Hardt, M., Price, E., and Srebro, N. (2016).
\newblock Equality of opportunity in supervised learning.
\newblock In {\em Advances in neural information processing systems}, pages
  3315--3323.

\bibitem[{H{\'e}bert-Johnson} et~al., 2018]{hkrr}
{H{\'e}bert-Johnson}, {\'U}., Kim, M.~P., Reingold, O., and Rothblum, G.
  (2018).
\newblock Multicalibration: Calibration for the (computationally-identifiable)
  masses.
\newblock In {\em International Conference on Machine Learning}, pages
  1939--1948.

\bibitem[Jacobs et~al., 1991]{jacobs1991adaptive}
Jacobs, R.~A., Jordan, M.~I., Nowlan, S.~J., and Hinton, G.~E. (1991).
\newblock Adaptive mixtures of local experts.
\newblock {\em Neural computation}, 3(1):79--87.

\bibitem[Jung et~al., 2020]{jung2020moment}
Jung, C., Lee, C., Pai, M.~M., Roth, A., and Vohra, R. (2020).
\newblock Moment multicalibration for uncertainty estimation.
\newblock {\em arXiv preprint arXiv:2008.08037}.

\bibitem[Kearns et~al., 2018]{kearns2018preventing}
Kearns, M., Neel, S., Roth, A., and Wu, Z.~S. (2018).
\newblock Preventing fairness gerrymandering: Auditing and learning for
  subgroup fairness.
\newblock In {\em International Conference on Machine Learning}, pages
  2564--2572.

\bibitem[Kearns et~al., 1994]{kearns1994toward}
Kearns, M.~J., Schapire, R.~E., and Sellie, L.~M. (1994).
\newblock Toward efficient agnostic learning.
\newblock {\em Machine Learning}, 17(2-3):115--141.

\bibitem[Kim et~al., 2019]{kim2019multiaccuracy}
Kim, M.~P., Ghorbani, A., and Zou, J. (2019).
\newblock Multiaccuracy: Black-box post-processing for fairness in
  classification.
\newblock In {\em Proceedings of the 2019 AAAI/ACM Conference on AI, Ethics,
  and Society}, pages 247--254.

\bibitem[Kleinberg et~al., 2016]{kleinberg2016inherent}
Kleinberg, J., Mullainathan, S., and Raghavan, M. (2016).
\newblock Inherent trade-offs in the fair determination of risk scores.
\newblock {\em arXiv preprint arXiv:1609.05807}.

\bibitem[Kumar et~al., 2018]{kumar2018trainable}
Kumar, A., Sarawagi, S., and Jain, U. (2018).
\newblock Trainable calibration measures for neural networks from kernel mean
  embeddings.
\newblock In {\em International Conference on Machine Learning}, pages
  2805--2814. PMLR.

\bibitem[Rothblum and Yona, 2018]{yona2018probably}
Rothblum, G. and Yona, G. (2018).
\newblock Probably approximately metric-fair learning.
\newblock In {\em International Conference on Machine Learning}, pages
  5680--5688. PMLR.

\bibitem[Shabat et~al., 2020]{shabat2020sample}
Shabat, E., Cohen, L., and Mansour, Y. (2020).
\newblock Sample complexity of uniform convergence for multicalibration.
\newblock {\em arXiv preprint arXiv:2005.01757}.

\end{thebibliography}

\newpage
\appendix

\section{$L_\D(h) = \mathtt{FPR}_\D(h)$ doesn't have the uniform convergence property}
\label{appendix:fpr}

Recall that the  false positive rate of a binary classifier can be written as 

\begin{equation*}
    \mathtt{FPR}_\D(h) \equiv \Pr \sbr{h(x) = 1 \vert y = 0}
\end{equation*}

For intuition, the fact that we condition on the event $y=0$ means that for distributions in which the probability of this event is small, a good estimate of the true loss will require more samples. Intuitively, this contradicts the  uniform convergence requirement that there is a single number of samples that ``works'' for every distribution $\D$.

\begin{proof}
Suppose $\X$ is finite and $\card{\X}=n$. Fix some element $x' \in \X$, and consider the distribution $\D$ on $\X \times Y$ obtained by taking a uniform distribution over $\X$ and labeling elements via the deterministic labeling function
\begin{equation*}
    y(x)= \begin{cases}0 & x \neq x' \\ 1 & x=x' \end{cases}
\end{equation*}
Consider the classifier $h$ that labels everyone as 1: $h(x)\equiv 1$. Then, $\mathtt{FPR}_\D(h) = 1$ (since there is a single negative example, which is incorrectly labeled as a positive). On the other hand, the empirical estimate w.r.t any sample $S \subseteq \X - \set{x'}$ is zero, so for such a sample, the difference between $L_S(h)$ and $L_D(h)$ is at a maximal value of 1. Recall that uniform convergence requires us to estimate this difference to arbitrary precision with high probability; therefore, the ``bad event'' in which $x \notin S$ must happen with probability at most $\delta$. Equivalently, 
\begin{equation*}
    \br{\frac{n-1}{n}}^m \leq \delta
\end{equation*}

This require taking $m$ large enough to guarantee that $m \geq \log\frac{1}{\delta}\cdot\frac{1}{\log\frac{n}{n-1}}$. However, there is no function $f: (0,1) \to \N$ that guarantees that $m \geq f(\delta)$ satisfies this condition \emph{for every $n$}, because as $n$ approaches infinity, $\log \frac{n}{n-1} \to 0$, so the expression is unbounded.

\end{proof}

\section{Exist $L \in \Loss$ that are not multi-group compatible}
\label{appendix:acc-if}

\begin{proof}
For the counter-example we focus on binary classification and individual (metric) fairness w.r.t a binary metric (i.e., that specifies for every two individuals, whether they are identical or completely different). Fixing a metric $d: \X \times \X \to \set{0,1}$, the loss function $L$ is a combination of accuracy and individual (metric) fairness:

\begin{align*}
\label{eqn:loss-if-acc}
    L_\D(h) &= a \cdot L^{IF}_\D(h) + b\cdot L^{0-1}_\D(h) \\
    &\equiv a \cdot \Pr_{x,x'\sim \D_X}\sbr{h(x)\neq h(x') \land d(x,x')=0} + b\cdot \Pr_{x,y\sim \D} \sbr{h(x)\neq y} 
\end{align*}

Let us now construct the problem instance in question. Let the domain $\X$ be $\X = \set{x_S, x_T, x_{ST}}$, with $\D_X$ denoting the marginal distribution on $\X$ in which $x_S$ has probability 0.8, and $x_T, x_{ST}$ each have probability of 0.1. A distribution $\D$ is obtained
as the product of $\D_X$ and  $\D_{Y\vert X=x}$, where the latter assigns deterministic labels: $y(x_S) =0$ and $y(x_{ST}) = y(x_T)= 1 $. The class $\H$ consists of the constant classifiers, $h^0$ and $h^1$, and the collection of groups is $\G=\set{S,T}$ where $S =\set{x_S, x_{ST}}$ and $T=\set{x_T, x_{ST}}$. Finally, the metric specifies that $x_S$ and $x_{ST}$ are identical, and the rest are different:

\begin{equation*}
    d(x_S, x_{ST}) = 0, \quad,  d(x_T, x_{ST}) = 1 , \quad
    d(x_S, x_T) = 1
\end{equation*}

We argue that there is no classifier satisfying the multi-PAC requirement w.r.t $\H$ and $\G$. To see this, we first note that 

\begin{equation*}
    L_{\D_S}(\H) = b/9, \,\, L_{\D_T}(\H) = 0
\end{equation*}

This is because the optimal classifier for $T$ is $h^1$, which is perfect for both the accuracy and IF losses; whereas for $S$, the best classifier is $h^0$, which is perfect in terms of IF and has a 0-1 loss of $1/9$. 

Assume for contradiction that for every $\eps > 0$, there is a classifier that satisfies the multi-PAC requirement. 
From the perspective of $T$, the next-best classifier has a loss of $1/2$. So, for multi-PAC with $\eps < 1/2$, it must be the case that $h(x_{ST}) = 1$. On the other hand, from the perspective of $S$, when $a \gg b$ the next-best classifier has loss $8b/9$. So, for multi-PAC with $\eps < 7b/9$, it must be the case that $h(x_{ST})=0$. This means that for this problem instance and for $\eps < \min \set{7b/9, 1/2}$, there is no classifier satisfying the $\eps$-multi-PAC requirement.

\end{proof}

\section{Proof of Lemma 4.3 (compatibility $\to$ $f$-proper)}
\label{appendix:comp-to-proper}

Let $L$ be any unambiguous and compatible loss. 
First, we note that by unambiguity, for any singleton distribution $\D$, the loss minimizer is unique. We can therefore denote it by $h^\star_{\D}$.

Next, we make an observation that we will use in the proof: that for any distribution $\D$, the classifier $h: \X \to [0,1]$ defined by

\begin{equation*}
    h(x) = 
    \begin{cases}
    h^\star_{\D_x}(x), & x \in \supp(\D) \\
    0, & \text{otherwise}
    \end{cases}
\end{equation*}

minimizes the loss $L_\D(\cdot)$. That is, we are forming a new classifier   $h$
 by predicting on an input $x \in \supp(\D)$ using the prediction of the classifier that minimizes the loss on the distribution $\D$ restricted to $x$. The claim is that this classifier is competitive with the best possible loss on the original distribution $\D$.
 
  We claim that the observation follows as a corollary from the compatibility assumption. Note that this is trivially the case for any singleton distribution $\D$ (by unambiguity), so assume for contradiction that there is a non-singleton distribution $\D$ for which the observation does not hold. We define a multi-PAC problem instance as follows. For $x \in \X$, let $g_x = \set{x}$ denote the singleton group that consists only of $x$. Define $\G_{singletons} = \set{g_x: \,\, x \in \supp(\D)}$ and $\H_{singletons} =\set{h^\star_{\D_x}: \,\, x \in \supp(\D)} $. Additionally, let $h^\star$ denote some classifier in $\arg\min_h L_\D(h)$. Finally, 
  
  \begin{align*}
      \G &= \G_{singletons} \cup \set{\supp(\D)} \\
      \H &= \H_{singletons} \cup \set{h^\star}
  \end{align*}
  
  Note that by definition, for every group $g$ in $\G$, $L_{\D_g}(\H) =0$ (because we specifically included an optimal classifier for every group in $\H$). Multi-PAC for the singleton groups $\G_{singeltons}$ with an arbitrarily small precision $\eps$ therefore requires that we predict $h^\star_{\D_x}(x)$ for $x \in \supp(\D)$. But by the assumption, the resulting classifier is not optimal for the group $\set{\supp(\D)}$, in violation to multi-PAC w.r.t that group. This completes the proof of the observation.

  We can now use the observation to directly prove niceness. We will do this by constructing a specific function $f$ and showing that the classifier defined by $h_\D(x) = f(x, \ygivenx)$ always minimizes the loss $L_\D$. Consider

 \begin{equation*}
     f(x,v) = h^\star_{\D_{x,v}}(x)
 \end{equation*}
 
 where $\D_{x,v}$ is the singleton distribution supported  on $x$ that predicts a label of 1 w.p $v$, and $ h^\star_{\D_{x,v}}$ is the loss minimizer for this distribution (which, by unambiguity, is indeed unique). 
 
 We need to prove that $f$ satisfies the requirement in the definition of $f-$proper. Fix some distribution $\D$; we need to prove that 
 
 \begin{equation*}
     h_\D \in \arg\min_h L_\D(h)
 \end{equation*}
 where $h_\D(x) = f(x, \ygivenx)$.

 By the observation, the classifier that predicts for $x \in \supp(\D)$ using the optimal classifier for $\D_x$ is itself optimal for $\D$. But by construction, $ \D_x \equiv  \D_{x,\ygivenx}$, so we get:
 
   \begin{equation*}
       h^\star_{\D_x}(x) = h^\star_{\D_{x, \ygivenx}} =
       f(x,\ygivenx) = h_\D(x)
  \end{equation*}
  
  In other words, the classifier that predicts for  $x \in \supp(\D)$ using the optimal classifier for $\D_x$ is precisely $h_\D$. The observation therefore proves $h_\D \in \arg\min_h L_\D(h)$, as required.

\section{Proof of Lemma 4.4 ($f$-proper $\to$ learnability)}
\label{appendix:proper-to-learnable}

To prove the lemma, we construct a learning algorithm, $\mathtt{MultiGroup_L}$, and prove that when $L$ is compatible and has the uniform convergence property, the output of this algorithm satisfies the requirements in the definition of multi-group learnability.

The definition of $\mathtt{MultiGroup_L}$ is given in Algorithm \ref{algo:multi-group}. At a high-level, $\mathtt{MultiGroup_L}$ accepts a class $\H$, collection of subgroups $\G$ and parameters $\eps, \delta, \gamma$, and returns a classifier by invoking a learning algorithm for OI w.r.t an appropriate distinguisher class $\A$. The definition of each distinguisher $A \in \A$ is given separately -- see Algorithm \ref{algo:dist}.

\begin{algorithm}[h]
  \caption{$\mathtt{MultiGroup_{L,f}}(\epsilon,\delta,\gamma,\H,\G)$}
  \label{algo:multi-group}
      \begin{algorithmic}[1]
        \STATE {\bfseries Parameters:} loss function $L$, function $f: \X \times [0,1] \to [0,1]$
        
        \vspace{2mm}
        \STATE {\bfseries Input:} accuracy parameter $\epsilon \in (0,1)$, failure probability $\delta \in (0,1)$, minimal subgroups size parameter $\gamma \in (0,1)$, hypothesis class $\H$, collection of subgroups $\G$. 
        \STATE {\bfseries Output:} A classifier $h$ satisfying the $(\eps, \delta)$-multi-group guarantee w.r.t $\H$ and $\G$
        \vspace{2mm}

        \STATE Set $\eps'=\alpha = \eps/4$ and $\delta'=\eta = \tau = \delta/4$.

        \STATE{Set $k_\G = m^{UC}_L\br{\eps', \delta', \cH + 1}$}.
        
        \STATE Set $ k=10\cdot\frac{1}{\gamma}\cdot\log\frac{1}{\delta'}\cdot k_\G $.
        
        \STATE Let $\A = \set{A^{L, f, k}_{ g,h,\alpha} \,\, \vert \,\, g\in \G, h\in \H} $ be a collection of distinguishers, as defined in Algorithm \ref{algo:dist}.

        \STATE Invoke OI as a sub-routine to learn $\tilde{p}\leftarrow\mathtt{OI}(\tau,\eta,\A)$.
        
        \RETURN{$f(\tilde{p})$}
        
      \end{algorithmic}
\end{algorithm}

\begin{algorithm}[h]
  \caption{$A^{L, f, k}_{ g,h,\alpha}$ (multi-sample Sample-Access OI distinguisher)}
  \label{algo:dist}
      \begin{algorithmic}[1]
        \STATE {\bfseries Parameters:} number of samples $k \in \N$, group $g \subseteq \X$, classifier $h: \X \to [0,1]$, loss function $L$, function $f: \X \times [0,1] \to [0,1]$
        \vspace{2mm}
        \STATE {\bfseries Input:} $\set{(x_i, y_i, p_i)}_{i=1}^k$, where $x_i \in \X$, $y_i \in \set{0,1}$ and $p_i \in [0,1]$
        \STATE {\bfseries Output:} A binary output denoting Accept/Reject
        \vspace{2mm}
        
        \STATE $I_g = \set{i: x_i \in g}$ 
        \STATE $S_g = \set{(x_i, y_i)}_{i \in I_g}$
        
        \STATE Define a predictor $f_g$ as
        \begin{equation}
        \label{eqn:f_g}
        f_g(x) = 
        \begin{cases}
         f(x_i, p_i) & \exists i \in [k] \,\, \text{such that} \,\,  x=x_i \\
         0 & \text{otherwise}
        \end{cases}
        \end{equation}
        
        \IF{$L_{S_g}(f_g) < L_{S_g}(h) + 2\alpha$}
            \RETURN{1}
        \ENDIF
        \RETURN{0}

      \end{algorithmic}
\end{algorithm}

We begin by proving that if 
$L$ is $f$-proper, then $h\leftarrow\mathtt{MultiGroup_{L,f}}(\eps,\delta,\H,\G)$ satisfies the $(\eps,\delta)$-multi-group requirement w.r.t $\H$ and $\G$.

\begin{lemma}
\label{lemma:multi-group-is-pac}
Suppose $L$ is $f$-proper. Fix a distribution $\D$ over $\X\times Y$, a finite class $\H$, a finite collection of subgroups $\G$ and parameters $\delta, \eps, \gamma \in (0,1)$. Then,
w.p at least $1-\delta$, the predictor $h \leftarrow \mathtt{MultiGroup_{L,f}}(\epsilon,\delta,\gamma, \H,\G)$ satisfies

\begin{equation*}
       \forall g \in \G_\gamma: \quad  L_{\D_g}(h) \leq L_{\D_g}(\H) + \eps
\end{equation*}

\end{lemma}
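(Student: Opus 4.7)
The plan is to show that the predictor returned by $\mathtt{MultiGroup_{L,f}}$, namely $\tilde{h}(x) = f(x, \tilde{p}(x))$, achieves the multi-group PAC guarantee whenever the OI sub-routine succeeds. First, by Theorem \ref{theorem:oi}, $\tilde{p}$ is $(\A, \tau)$-OI except with probability $\eta = \delta/4$ over the training sample; we condition on this event and argue \emph{deterministically} that $L_{\D_g}(\tilde{h}) \leq L_{\D_g}(h) + \eps$ for every $g \in \G_\gamma$ and $h \in \H$. Fixing such a $g$ and $h$, let $A = A^{L,f,k}_{g,h,\alpha}$ be the corresponding distinguisher and observe that, on any input sample, the auxiliary predictor $f_g$ agrees with $\tilde{h}$ on the support of $S_g$ (because the third coordinate of each sample is $p_i = \tilde{p}(x_i)$), so $L_{S_g}(f_g) = L_{S_g}(\tilde{h})$.

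Next, I bound the acceptance probability of $A$ on a fresh $k$-sample from each of $\tilde{\D}$ and $\D$. On $\tilde{\D}^k$: since $\Pr_\D[x \in g] \geq \gamma$ for $g \in \G_\gamma$ and $k \geq (10/\gamma)\log(1/\delta') \cdot k_\G$, a Chernoff bound ensures $|I_g| \geq k_\G$ except with probability $\delta'/2$; conditioned on this, $S_g$ is i.i.d.\ from $\tilde{\D}_g$ and of size at least $k_\G = m^{UC}_L(\eps', \delta', |\H|+1)$, so uniform convergence applied to the finite class $\H \cup \{\tilde{h}\}$ (well-defined because we have fixed $\tilde{p}$) yields, except with probability $\delta'/2$, that empirical and true losses on $\tilde{\D}_g$ differ by at most $\eps'$ for every predictor in this class. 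Combining with $f$-properness -- which applies on $\tilde{\D}$ because $\E_{\tilde{\D}}[y \mid x] = \tilde{p}(x)$, hence $L_{\tilde{\D}_g}(\tilde{h}) \leq L_{\tilde{\D}_g}(h)$ -- gives $L_{S_g}(\tilde{h}) \leq L_{S_g}(h) + 2\eps' \leq L_{S_g}(h) + 2\alpha$, so $A$ accepts with probability at least $1 - \delta'$ on $\tilde{\D}^k$ (any boundary case of equality is absorbed by taking $\eps'$ slightly smaller than $\alpha$). By $(\A,\tau)$-OI, the acceptance probability on $\D^k$ is therefore at least $1 - \delta' - \tau$.

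The contrapositive step finishes the argument. Suppose for contradiction that $L_{\D_g}(\tilde{h}) > L_{\D_g}(h) + \eps$. The symmetric Chernoff plus uniform-convergence argument, now on $\D^k$, shows that except with probability $\delta'$, $L_{S_g}(\tilde{h}) \geq L_{\D_g}(\tilde{h}) - \eps' > L_{\D_g}(h) + \eps - \eps' \geq L_{S_g}(h) + \eps - 2\eps' = L_{S_g}(h) + 2\alpha$, so $A$ rejects with probability at least $1 - \delta'$. This forces $1 - \delta' - \tau \leq \delta'$, contradicting $\tau = \delta' = \delta/4$. Taking the minimum over $h \in \H$ yields the desired multi-group guarantee for every $g \in \G_\gamma$, with total failure probability at most $\eta \leq \delta$.

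The main obstacle is keeping three probability spaces conceptually separate: (i) the training sample, on which the only probabilistic event invoked is OI success, (ii) fresh $\tilde{\D}^k$-draws used to exhibit that $A$ accepts under the modeled distribution, and (iii) fresh $\D^k$-draws that power the contradiction. The OI guarantee is precisely what transfers probabilistic behavior from (ii) to (iii), so once OI succeeds there is no further union bound over $(g,h) \in \G \times \H$ against the training sample. A minor technical subtlety is that $\tilde{h}$ depends on the training sample through $\tilde{p}$, so uniform convergence in (ii) and (iii) must be invoked after conditioning on OI and applied to the augmented class $\H \cup \{\tilde{h}\}$; this is why $k_\G$ is defined with $|\H|+1$ rather than $|\H|$.
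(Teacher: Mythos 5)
Your proposal is correct and follows essentially the same route as the paper's proof: it uses $f$-properness plus uniform convergence (with the $|\H|+1$-sized class and the Chernoff bound on $|I_g|$) to show each distinguisher $A^{L,f,k}_{g,h,\alpha}$ accepts with high probability on $\tilde{\D}^k$, transfers this to $\D^k$ via the $(\A,\tau)$-OI guarantee, and then uses uniform convergence on $\D_g$ to pass from the empirical acceptance condition to the population bound $L_{\D_g}(f(\tilde p)) \leq L_{\D_g}(h) + \eps$. Your only departures are presentational — running the last step as a contrapositive after conditioning on OI success, and keeping the training-sample and fresh-$k$-sample probability spaces explicitly separate — which if anything makes the failure-probability accounting cleaner than the paper's, at the cost of only trivial constant-factor differences in how $\delta'$ is split.
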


\begin{proof}
We begin by lower-bounding the acceptance probability of each distinguisher $A \in \A$ when it receives samples from the modeled distribution $\Dt$. Recall that this is the distribution in which outcomes $y_i$ are sampled according to $\text{Ber}(\pt(x_i))$, where $\pt$ is the predictor returned by OI.

\begin{claim}
 The probability that each $A \triangleq  A_{g,h} \in \A$ accepts when given samples from the modeled distribution $\Dt$ is at least $1-2\delta'$: 
 \begin{equation*}
     \Pr_{\{(x_{i},y_{i})\}_{i=1}^{k}\sim\tilde{D}^{k}}[A(\{(x_{i},y_{i},\tilde{p}(x_{i})\}_{i=1}^{k})=1]\geq1-2\delta'
 \end{equation*}
\end{claim}

To see why this is true, we first note that by construction, the predictor  $f(\tilde{p})$  (where $f(\tilde{p})(x)=f(x,\tilde{p}(x))$) coincides with the predictor $h_{\Dt}$ from the definition of $f-$proper. Thus, invoking the assumption that  $L$ is $f$-proper for the distribution $\D_g$  guarantees that  
\begin{equation}
\label{eqn:from-niceness}
    L_{\Dt_g}f(\tilde{p})\leq L_{\Dt_g}(h)
\end{equation}

To relate this fact to the acceptance criteria of $A$, which is in terms of a sample $S_g$ from $\Dt_g$, we need to use the uniform convergence property of $L$. Recall that the distinguisher operates on $k=10\cdot\frac{1}{\gamma}\cdot\log\frac{1}{\delta'}\cdot k_\G$ samples from $\Dt$; this was chosen precisely to guarantee that w.p at least $1-\delta'$, we have at least $k_\G$ samples from $\Dt_g$ for every group $g$ whose mass is at least $\gamma$. Since $k_\G = m^{UC}_L\br{\eps', \delta', \cH + 1}$, we have a uniform convergence guarantee for the class that includes $\H$ and $\pt$. That is, we know that w.p at least $1-\delta'$

\begin{align}
\label{eqn:uc_h_pt}
    \left|L_{S_{g}}(h)-L_{\tilde{\D}_{g}}(h)\right|\leq\eps', \qquad 
    \left|L_{S_{g}}(f(\tilde{p}))-L_{\tilde{\D}_{g}}(f(\tilde{p}))\right|\leq\eps'
\end{align}

Combining Equations (\ref{eqn:from-niceness}) and (\ref{eqn:uc_h_pt}), we have that with probability at least $1-2\delta'$ (obtained by union bounding with respect to the two $\delta'$ failure probabilities we used above),

\begin{equation*}
    L_{S_{g}}(f(\tilde{p})) \leq L_{S_{g}}(h)+2\epsilon'
\end{equation*}

Finally, we note that w.r.t $S_g$, the predictor $f_g$ defined in Equation (\ref{eqn:f_g}) of Algorithm \ref{algo:dist} is the same as $f(\pt)$ -- so the above is precisely the acceptance criterion for $A$ in this case. We thus conclude that the acceptance probability of $A$ when it receives samples from the modeled distribution is at least $1-2\delta'$, which concludes the proof of the claim.

Next, we argue that a direct corollary is a related lower bound on the acceptance probability of $A$ when it receives samples from the true distribution $\D$.

\begin{claim}
 The probability that each $A \triangleq  A_{g,h} \in \A$ accepts when given samples from the true distribution $\D$ is at least $1-(2\delta' + \tau + \eta)$: 
 \begin{equation*}
     \Pr_{\{(x_{i},y_{i})\}_{i=1}^{k}\sim\D^{k}}[A(\{(x_{i},y_{i},\tilde{p}(x_{i})\}_{i=1}^{k})=1]\geq 1-(2\delta' + \tau + \eta)
 \end{equation*}
\end{claim}

The claim follows as a direct corollary from the previous claim. By definition, since $\mathtt{OI}$ is a learning algorithm for OI predictors, $\pt$ is $(\tau, \A)$-OI w.p at least $1-\eta$. Recall that if $\pt$ is $(\tau, \A)$-OI, then we are guaranteed that the  probabilities of each $A \in \A$ accepting on samples from $\Dt$ and $A$ accepting on samples from $\D$ differ by at most $\tau$:

\begin{equation}
\label{eqn:oi-D}
    \left|\Pr_{\{(x_{i},y_{i})\}_{i=1}^{k}\sim \D^{k}}[A(\{(x_{i},y_{i},\tilde{p}(x_{i})\}_{i=1}^{k})=1]-\Pr_{\{(x_{i},y_{i})\}_{i=1}^{k}\sim\tilde{\D}^{k}}[A(\{(x_{i},y_{i},\tilde{p}(x_{i})\}_{i=1}^{k})=1]\right|\leq\tau
\end{equation}

in other words, w.p at least $1-\eta$ we are guaranteed that $\Pr_{\{(x_{i},y_{i})\}_{i=1}^{k}\sim \D^{k}}[A(\{(x_{i},y_{i},\tilde{p}(x_{i})\}_{i=1}^{k})=1] \geq 102\delta'-\tau$. This implies that a lower bound on the acceptance probability in this case is exactly $ 1-(2\delta' + \tau + \eta)$, completing the proof of the claim.

Next, we recall that by the definition of the acceptance condition for $A$, the condition in Equation (\ref{eqn:oi-D}) is the same as saying that w.p at least $ 1-(2\delta' + \tau + \eta)$ over the choice of $S_g \sim \D_g$, 

\begin{equation*}
    L_{S_{g}}(f(\tilde{p}))\leq L_{S_{g}}(h)+2\eps'
\end{equation*}

Again using the uniform convergence guarantee from Equation (\ref{eqn:uc_h_pt}), this implies

\begin{equation*}
    L_{\D_{g}}(f(\tilde{p}))\leq L_{\D_{g}}(h)+4\eps'
\end{equation*}

Plugging in $\eps' = \eps/4$ and $\delta'=\tau=\eta=\delta/4$, we conclude that w.p at least $1-\delta$,

\begin{equation*}
    L_{\D_{g}}(f(\tilde{p}))\leq L_{\D_{g}}(h)+\eps
\end{equation*}

which is the required. This completes the proof of Lemma \ref{lemma:multi-group-is-pac}.
\end{proof}

To prove multi-group learnability, it remains to bound the sample complexity of Algorithm \ref{algo:multi-group}, which we do in the following claim.

\begin{claim}
The sample complexity of Algorithm \ref{algo:multi-group} is 

\begin{equation*}
    m^{gPAC}_{L}(\eps, \delta, \gamma, \H, \G) = O\left(\frac{m_{\H}(\eps,\delta)\cdot\log \br{\frac{\cH \cdot \cG}{\eps}}}{\delta^{4}\cdot \gamma}\right)
\end{equation*}
\end{claim}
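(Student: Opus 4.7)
The plan is that this is pure bookkeeping: the only step in Algorithm \ref{algo:multi-group} that draws samples from $\D$ is the call to $\mathtt{OI}(\tau,\eta,\A)$. Everything else (constructing $\A$, invoking $f$ on the OI output) is a purely computational post-processing step that uses no additional data. Hence the total sample complexity of $\mathtt{MultiGroup}_{L,f}$ is exactly the sample complexity of OI on the constructed distinguisher family $\A$, which by Theorem \ref{theorem:oi} is
\[
O\!\left(\frac{k \cdot \log(|\A|/\eta)}{\tau^{4}}\right),
\]
where $k$ is the multi-sample parameter of each distinguisher in $\A$.

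First I would collect the parameter settings from Algorithm \ref{algo:multi-group}: $\tau = \eta = \delta' = \delta/4$, $\alpha = \eps' = \eps/4$, the uniform-convergence threshold $k_\G = m^{UC}_L(\eps/4,\delta/4,|\H|+1)$, and the per-distinguisher sample size $k = 10\cdot\gamma^{-1}\cdot\log(1/\delta')\cdot k_\G$. Since $\A = \{A^{L,f,k}_{g,h,\alpha} : g \in \G,\, h \in \H\}$ contains exactly one distinguisher per pair $(g,h)$, we have $|\A| = |\H|\cdot|\G|$.

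Substituting these into the OI bound gives
\[
O\!\left(\frac{\gamma^{-1}\cdot\log(1/\delta)\cdot k_\G \cdot \log(|\H||\G|/\delta)}{\delta^{4}}\right),
\]
once the numerical constants $10$ and $4^{4}$ are absorbed into the big-$O$. The extra $\log(1/\delta)$ factor introduced by $k$ is dominated by $\log(|\H||\G|/\delta)$, so it too can be absorbed, yielding $k_\G \cdot \log(|\H||\G|/\delta)/(\delta^{4}\gamma)$. Identifying the uniform-convergence quantity $k_\G = m^{UC}_L(\eps/4,\delta/4,|\H|+1)$ with the $m_\H(\eps,\delta)$ in the statement (and noting that $\log(|\H||\G|/\delta)$ and $\log(|\H||\G|/\eps)$ agree up to constants absorbed by the big-$O$, since $\eps,\delta \in (0,1)$ enter only through polylogarithmic slack) recovers the claimed expression.

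There is no substantive obstacle here; the proof is entirely mechanical. The only points worth double-checking are (i) the combination of the $\log(1/\delta')$ factor inside $k$ with the $\log(|\A|/\eta)$ factor produced by Theorem \ref{theorem:oi} into a single logarithmic term, and (ii) the propagation of $\tau^{-4} = (4/\delta)^{4}$ into the $\delta^{-4}$ appearing in the bound. Both are straightforward substitutions.
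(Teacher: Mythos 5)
Your approach is exactly the paper's: the only data-consuming step is the call to $\mathtt{OI}$, so the sample complexity is the bound of Theorem \ref{theorem:oi} with $\card{\A} = \card{\H}\cdot\card{\G}$ and the algorithm's settings of $k,\tau,\eta$ substituted in, and your bookkeeping up to the expression $O\bigl(\gamma^{-1}\delta^{-4}\cdot\log\tfrac{1}{\delta}\cdot k_\G\cdot\log\tfrac{\card{\H}\card{\G}}{\delta}\bigr)$ is correct (and in fact more explicit than the paper's). The two final ``absorption'' moves, however, are not valid big-$O$ manipulations: a multiplicative $\log(1/\delta')$ factor coming from $k$ cannot be dropped on the grounds that it is ``dominated by'' the other logarithmic factor, since a product $\log\tfrac{1}{\delta}\cdot\log\tfrac{\card{\H}\card{\G}}{\delta}$ is not $O\bigl(\log\tfrac{\card{\H}\card{\G}}{\delta}\bigr)$; and $\log(\card{\H}\card{\G}/\delta)$ is not within a constant factor of $\log(\card{\H}\card{\G}/\eps)$ in general (fix $\card{\H},\card{\G},\eps$ and let $\delta\to 0$). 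To be fair, the mismatch originates in the paper itself: Algorithm \ref{algo:multi-group} sets $\tau=\eta=\delta/4$, yet the paper's proof of this claim plugs in $\tau=O(\eps)$, $\eta=O(\delta)$ and likewise silently discards the extra $\log(1/\delta)$ factor, so the displayed formula does not literally follow there either; honest substitution of the algorithm's parameters gives the bound you wrote before the absorption steps. None of this matters for what the claim is used for --- your derived bound is polynomial in $1/\eps,1/\delta,1/\gamma$ and logarithmic in $\card{\H},\card{\G}$, which is all that Lemma \ref{lemma:nice-is-learnable} requires --- but you should state the bound you actually proved rather than force it into the claim's exact (and slightly inconsistent) form.
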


\begin{proof}
By the definition of Algorithm \ref{algo:multi-group}, the number of samples required is the number of samples required to obtain OI w.r.t $(\tau, \eta, \A)$, where $\A$ is a collection of $\card{\H}\cdot \card{\G}$ $k-$sample OI distinguishers. By Theorem 2.8, this requires an order of $O(\frac{k \cdot \log(\card{A}/\eta)}{\tau^4})$ samples. Ignoring constant factors and plugging in the settings of $k, \eta$ and $\tau$ used in Algorithm \ref{algo:multi-group},

\begin{align*}
    \eta &= O(\delta) \\
    \tau &= O(\eps) \\
    k  &= O\br{\frac{1}{\gamma} \cdot \log \frac{1}{\delta} \cdot m^{UC}_L(\eps, \delta, \cH)} = O\br{\frac{1}{\gamma} \cdot \log \frac{1}{\delta} \cdot m_\H(\eps, \delta} 
\end{align*}
we obtain the stated bound.
\end{proof}

Note that when $L$ has the uniform convergence property, this entire expression is indeed polynomial in $1/\eps, 1/\delta, 1/\gamma$ and $\log(\cH), \log(\cG)$, as required. Together with the previous lemma, this concludes the proof of Lemma 4.4.

\end{document}